\def\eqref#1{equation~\ref{#1}}
\def\1{\bm{1}}
\def\mE{{\bm{E}}}
\def\mM{{\bm{M}}}
\def\mW{{\bm{W}}}
\DeclareMathAlphabet{\mathsfit}{\encodingdefault}{\sfdefault}{m}{sl}
\SetMathAlphabet{\mathsfit}{bold}{\encodingdefault}{\sfdefault}{bx}{n}
\newcommand{\E}{\mathbb{E}}
\newcommand{\R}{\mathbb{R}}
\DeclareMathOperator{\Tr}{Tr}
\newtheorem{theorem}{Theorem}
\newtheorem{corollary}{Corollary}[theorem]
\newtheorem{definition}{Definition}
\newtheorem{example}{Example}[section]
\newtheorem{lemma}[theorem]{Lemma}
\newcommand{\Lap}{\mathbf L}
\newcommand{\GFMMD}{\mathcal{G\!F\!M\!M\!D}}
\title{Graph Fourier MMD for Signals on Graphs}
\author{Samuel Leone$^{\: 1}$ \thanks{1. Applied Math Program, New Haven, Connecticut} \and Aarthi Venkat$^{\:2}$ \thanks{2. Computational Biology \& Bioinformatics Program, Yale University, New Haven, Connecticut} \and Guillaume Huguet$^{\:3}$ \thanks{3. Dept. of Math. \& Stat, 
Université de Montréal ;
Mila - Quebec AI Institute, 
Montreal, QC, Canada} \and Alexander Tong$^{\:4}$ \thanks{4. Dept. of Comp. Sci. \& Oper. Res., 
Université de Montréal ; 
Mila - Quebec AI Institute, Montreal, QC, Canada} \and Guy Wolf$^{\:5}$ \thanks{5. Dept. of Math. \& Stat, 
Université de Montréal ;
Mila - Quebec AI Institute, 
Montreal, QC, Canada} \and Smita Krishnaswamy$^{\:6}$ \thanks{6. Department of Computer Science, 
Department of Genetics, 
Yale University, 
New Haven, Connecticut. Correspondence to: smita.krishnaswamy@gmail.com}}
\begin{document}
%
\maketitle
\begin{abstract}
  While numerous methods have been proposed for computing distances between probability distributions in Euclidean space, relatively little attention has been given to computing such distances for distributions on graphs. However, there has been a marked increase in data that either lies on graph (such as protein interaction networks) or can be modeled as a graph (single cell data), particularly in the biomedical sciences. Thus, it becomes important to find ways to compare signals defined on such graphs. Here, we propose Graph Fourier MMD (GFMMD), a novel distance between distributions and signals on graphs. GFMMD is defined via an optimal witness function that is both smooth on the graph and maximizes difference in expectation between the pair of distributions on the graph. We find an analytical solution to this optimization problem as well as an embedding of distributions that results from this method.  We also prove several properties of this method including scale invariance and applicability to disconnected graphs. We showcase it on graph benchmark datasets as well on single cell RNA-sequencing data analysis. In the latter, we use the GFMMD-based gene embeddings to find meaningful gene clusters. We also propose a novel type of score for gene selection called {\em gene localization score} which helps select genes for cellular state space characterization.
\end{abstract}

\section{Introduction}

Here, we address the question of how to organize and compare signals on graphs in such a way that accounts for geometric structure on their underlying space. In particular, given a weighted graph $\mathcal G = \mathcal (V, \mathcal E, w)$ and a set of functions $\{f_i\}_i$ on the vertices: $f_i : \mathcal V \to \mathbb R$, how can we structure and analyze these signals? We will first consider the case when $f_i$ is a probability mass function and extend the framework to arbitrary signals. This has a very natural applications to many modern datasets. 



 We present a new distance that belongs to the family of integral probability metrics \cite{sriperumbudur2012empirical} called maximal mean discrepancy or MMD. Integral probability metrics are distances between probability distributions that are characterized by a witness function that maximizes the discrepancy between distributions in expectation. MMDs have further structure in the witness function, requiring that they come from a Reproducing Kernel Hilbert Space.
Our notion of MMD, that we call \emph{Graph Fourier MMD} (GFMMD), is a distance between signals on a data graph that is found by analytically solving for an optimal witness function. GFMMD borrows notions from optimal transport, but does not require a distance metric, and thus generalizes to any undirected graph with nonnegative affinities. Furthermore, through the use of Chebyshev polynomials ~\cite{mason2002chebyshev}, GFMMD can be computed rapidly, and has a closed-form solution. We demonstrate its potential on toy datasets as well as single cell data, where we use it to identify gene modules. In the single cell setting, we focus on the application of embedding a set of genes on a graph of cells, as created from single cell RNA-sequencing data, and also in measuring whether the expression of a gene is localized (i.e., characteristic of a subpopulation of cells) or global like a house-keeping gene.

Our main contributions are as follows: 1) We define Graph Fourier MMD as a distance between signals on arbitrary graphs, and prove that it is both an integrable probability metric and maximum mean discrepancy. 2) We derive an exact analytical solution for GFMMD which can be approximated in $\mathcal O(n(\log n + m^2))$ time to calculate all pairwise-distances between distributions, where $n$ is the number of vertices of the graph and $m$ is the number of signals. 3) We derive a feature map for GFMMD that allows for efficient embeddings and dimensionality reduction. 4)  We provide an efficient Chebyshev approximation method for computing GFMMD among a set of signals. 5) We showcase application of GFMMD to single cell RNA-sequencing data. In short, Graph Fourier MMD is a simple, interpretable, and above all else effective method of comparing abstract distributions.







\subsection{Preliminaries}

\paragraph{Integral probability metrics} IPMs~\cite{muller1997integral,sriperumbudur2012empirical} constitute a family of distances between probability distributions. They are often used when dealing with empirical samples (datasets) sampled from a continuous space. In contrast, the alternative class of $\phi$-divergences (such as KL-divergence) is often less useful as a measure between empirical samples with poor behavior when the domains do not overlap. In contrast to $\phi$-divergences, integral probability metrics are defined over a metric space, this allows for a reasonable distance between distributions with non-overlapping support. 


\begin{definition}\label{def:IPM}
Given a metric space $(\mathcal X, d)$, a family $\mathcal F$ of measurable, bounded functions on $\mathcal X$, and two measures $P$ and $Q$ on $\mathcal X$, the IPM between $P$ and $Q$ is defined as
\[ \gamma_{\mathcal F}(P,Q) \triangleq \sup_{f \in \mathcal F}\mathbb E_P(f) - \mathbb E_Q(f). \]
\vspace{-5mm}
\end{definition}

Here, $\mathcal F$ is a family of ``witness function'' since it emphasizes the differences between $P$ and $Q$, choosing a certain $\mathcal F$ determines the IPMs. For certain classes of $\mathcal F$, the resulting distance is called an \emph{kernel Maximum Mean Discrepancy} (MMD) ~\cite{gretton2012kernel}. If $\mathcal H$ is a Reproducing Kernel Hilbert Space (RKHS) of functions on $\mathcal X$ (equipped with norm $\|(\cdot)\|_{\mathcal H}$), then the IPM corresponding to $\mathcal F = \{f : \| f \|_{\mathcal H} \leq 1\}$ is an MMD. Numerous distances between distributions are IPMs, given a suitable choice of $\mathcal F$. For example, the Wasserstein distance is an IPM where $\mathcal F$ corresponds to the family of Lipschitz functions.

\paragraph{Wasserstein Distance} The Earth Mover's Distance (EMD), also known as the 1-Wasserstein distance, is a distance between probability distributions designed to measure the least amount of "work" it takes to move mass from one distribution to another. Formally, we are given two distributions $P$ and $Q$ on a measure space $(\Omega, \mathcal F, \mu)$ and a distance $d : \mathcal X \times \mathcal X \to \mathbb R$. Most commonly, $\Omega$ might be a Riemann manifold, $\mathbb R^d$, or in our case, a finite graph. We define the space of couplings of $P$ and $Q$, denoted $\Pi(P,Q)$ to be the set of joint probability distributions whose marginals are equal to $P$ and $Q$. 

\begin{definition}\label{def:Wasserstein}
    The 1-Wasserstein Distance between $P$ and $Q$ is defined to be: 
    
\vspace{-2mm}    
\[ W(P,Q) \triangleq \min_{\pi \in \Pi(P,Q)} \int_{\mathcal X \times \mathcal X} d(x,y)\pi(dx,dy). \]
\vspace{-2mm}

\end{definition}

The supremum joint distribution $\pi$ would then be called the \emph{optimal transport plan}. In the case that $\Omega$ is finite (say of size $n$), $\Pi(P,Q)$ could be thought of as the set of $n \times n$ matrices $\pi$ for which $\pi \mathbf{1} = P, \mathbf{1}^T\pi = Q$. Then we could represent distances in a $n \times n$ matrix $D$, and the EMD is given by $\min_\pi \pi \cdot D$.  Typical solutions to EMD in its {\em primal form} are found using linear programming. The Kantorivich-Rubinstein Theorem, however, provides a dual formulation in terms of smooth functions:

\begin{theorem}\label{thm:K-R}(Kantorovich-Rubinstein)
The EMD is an IPM with $\mathcal F$ the space of $1$-Lipschitz functions
    \vspace{-2mm}
    \[ W(P,Q) = \sup_{\|f\|_{\leq 1}} \mathbb E_P(f) - \mathbb E_Q(f). \]
    \vspace{-5mm}
\end{theorem}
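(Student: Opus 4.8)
The plan is to prove the identity in the finite-graph setting via linear programming duality, since (as noted above) in our case $\Omega$ is a finite graph of size $n$; the continuous statement then follows from the standard measure-theoretic Kantorovich duality theorem, whose proof follows the same mechanism and which I will not reproduce here. Writing $P, Q \in \mathbb R^n$ for the mass vectors and $D \in \mathbb R^{n\times n}$ for the matrix of pairwise distances $d(i,j)$, Definition~\ref{def:Wasserstein} becomes the linear program $W(P,Q) = \min\{\langle \pi, D\rangle : \pi \ge 0,\ \pi\mathbf 1 = P,\ \mathbf 1^T\pi = Q\}$. This LP is feasible (the product coupling $\pi = PQ^T$ meets the marginal constraints) and its objective is bounded below by $0$ (as $\pi \ge 0$ and $D \ge 0$), so by LP strong duality its value equals that of the dual program $\max\{\langle \phi, P\rangle + \langle \psi, Q\rangle : \phi_i + \psi_j \le d(i,j)\ \forall i,j\}$, where $\phi,\psi \in \mathbb R^n$ are the dual variables attached to the row- and column-sum constraints.

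The remaining work is to show that this dual program has the same value as $\sup_{f\ 1\text{-Lipschitz}} \mathbb E_P(f) - \mathbb E_Q(f)$. The easy inequality ($\ge$) is that any $1$-Lipschitz $f$ yields a feasible dual pair $(\phi,\psi) = (f,-f)$, since $f(i)-f(j) \le d(i,j)$, with dual objective exactly $\mathbb E_P(f) - \mathbb E_Q(f)$. For the reverse inequality I would use the $c$-transform. Given any feasible $(\phi,\psi)$, replace $\psi$ by $\psi^*_j := \min_i\big(d(i,j)-\phi_i\big)$; this remains feasible and satisfies $\psi^*_j \ge \psi_j$ coordinatewise, so since $Q \ge 0$ the objective does not decrease. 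Then replace $\phi$ by $\phi^*_i := \min_j\big(d(i,j)-\psi^*_j\big)$, which is feasible with $\phi^*_i \ge \phi_i$, so the objective again does not decrease. Using symmetry and the triangle inequality of $d$, one checks that $\phi^*$ is $1$-Lipschitz: for any $i,i'$, picking $j^*$ attaining the minimum defining $\phi^*_{i'}$ gives $\phi^*_i - \phi^*_{i'} \le \big(d(i,j^*)-\psi^*_{j^*}\big) - \big(d(i',j^*)-\psi^*_{j^*}\big) = d(i,j^*)-d(i',j^*) \le d(i,i')$, and symmetrically. Finally, because $\phi^*$ is $1$-Lipschitz and $d(j,j)=0$, one has $\min_i\big(d(i,j)-\phi^*_i\big) = -\phi^*_j$: the index $i=j$ attains $-\phi^*_j$, while $\phi^*_i \le \phi^*_j + d(i,j)$ gives $d(i,j)-\phi^*_i \ge -\phi^*_j$ for every $i$. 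Since $\psi^*_j \le d(i,j)-\phi^*_i$ for all $i$ forces $\psi^*_j \le -\phi^*_j$, we may replace $\psi^*$ by $-\phi^*$ without decreasing the objective. Thus the dual value is attained by a pair $(f,-f)$ with $f := \phi^*$ being $1$-Lipschitz, whose objective is $\mathbb E_P(f) - \mathbb E_Q(f)$, establishing the $\le$ direction.

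Chaining these equalities gives $W(P,Q) = \sup_{\|f\|_{\mathrm{Lip}} \le 1}\ \mathbb E_P(f) - \mathbb E_Q(f)$, which is exactly the IPM representation with $\mathcal F$ the $1$-Lipschitz functions. I expect the only genuine subtleties to be (i) invoking LP strong duality cleanly, which here needs only primal feasibility and boundedness (both immediate); and (ii) the $c$-transform reduction, which crucially exploits that $d$ is a true metric — symmetry, the triangle inequality, and vanishing diagonal — without which one could not collapse the two dual potentials into a single witness $f$ and its negation. These observations are also what make the finite-graph argument extend to the general statement by the classical duality theorem.
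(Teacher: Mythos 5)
The paper does not actually prove this statement: Theorem~\ref{thm:K-R} is quoted as background and the proof is deferred entirely to the cited reference (Dudley), which establishes the general measure-theoretic Kantorovich duality. Your argument is therefore a genuinely different (and more self-contained) route, and it is correct in the setting the paper actually uses, namely a finite vertex set. The chain of steps checks out: the product coupling $PQ^T$ gives primal feasibility, nonnegativity of $\pi$ and $D$ gives boundedness, so LP strong duality applies and yields the two-potential dual; the $c$-transform steps $\psi\mapsto\psi^*$, $\phi\mapsto\phi^*$ are monotone improvements because $P,Q\ge 0$, the triangle inequality gives $1$-Lipschitzness of $\phi^*$, and $d(j,j)=0$ together with symmetry collapses the pair to $(\phi^*,-\phi^*)$. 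What your approach buys is an elementary, finite-dimensional proof that makes explicit exactly which properties of $d$ are used (symmetry, triangle inequality, vanishing diagonal) --- which is useful context for the paper's subsequent move, where the Lipschitz constraint is replaced by the quadratic-form constraint $f^T\mathbf{L}f\le 1$ and no metric is assumed. What the cited general theorem buys is the statement for arbitrary Polish metric spaces, which your LP argument does not reach and which you correctly flag as being delegated to the classical result. The one point worth stating explicitly if this were to be included: strong duality for a feasible, bounded LP guarantees a zero duality gap and dual attainment, which is the precise form you need; citing that cleanly (rather than generic ``LP duality'') closes the only loose end.
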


We refer to \cite{dudley2018real} for a proof of the previous theorem. Intuitively, we can think of suitable functions $f$ as being varying slowly over $\mathcal X$. The 1-Lipschitz constraint prevents witness functions from behaving too erratically over the space. As we will see, duality provides a valuable intuition for using smooth functions to compare functionals in abstract spaces.

\paragraph{The Graph Laplacian}

For a weighted graph $\mathcal G = (\mathcal V, \mathcal E, w)$ on $n$ vertices, we have a number of associated matrices. The first of which is an adjacency / affinity matrix $\mathbf{A}$ for which, given vertices $a$ and $b$, $\mathbf{A}(a,b) = w(a,b)$; for our purposes, we assume $w(a,b) \geq 0$. In the case when $\mathcal V$ belongs to a metric space $(\mathcal X, d)$, we have an associated distance matrix $M$ for which $M(a,b) = d(a,b)$ for all $a,b \in \mathcal V$. Oftentimes, the affinity matrix $\mathbf{A}$ is generated by a nonlinear kernel function $k(\cdot)$ so that 
$\mathbf{A}(a,b) = k(M(a,b))$. For our purposes, if $\mathbf{A}$ is generated in this way, we will call $\mathcal G$ a \emph{affinity graph}. 
There is also a diagonal degree matrix for which $\mathbf{D}(a,a) = \sum_{b \in \mathcal V}w(a,b)$. Finally, we define the combinatorial Laplacian $\Lap = \mathbf{D} - \mathbf{A}$. It can be shown that for any function on the vertices $f$, $f^T\Lap f = \sum_{(a,b) \in \mathcal E} w(a,b)(f(a) - f(b))^2$. From this, it's clear that $\Lap$ is positive semi-definite, and thus has a spectrum $\{(\lambda_1, \psi_1)... (\lambda_n, \psi_n)\}$, where $\lambda_1 \geq ... \geq \lambda_n$.

\paragraph{Effective Resistances} In \cite{spielman2008graph}, graphs are regarded as electrical circuits with edge weights providing capacities. In such a graph, the \emph{effective resistance} Re($a,b$) between vertices $a$ and $b$ is equal to Re($a,b) = \|\Lap^{-\frac{1}{2}}\delta_a - \Lap^{-\frac{1}{2}}\delta_b\|_2^2$, where $\delta_a,\delta_b$ are the one-hot encodings of vertices $a$ and $b$, respectively. Effective resistances provide valuable information about a graph. For instance, Spielman \& Srivastava use effective resistances between adjacent vertices to sparsify a graph. We will show, if we view $\delta_a,\delta_b$ as probability densities concentrated at $a,b$, Graph Fourier MMD provides an extension of resistances to arbitrary probability distributions on graphs. Effective resistences have been shown to be related to commute times, thus this provides a generalization of commute time when the initial and final position are not localized to a single node \cite{chandra1989electrical}.  


\subsection{Related Work}


The closest related work is that of \cite{verma2017hunt}, which constructs a family of spectral distances between graph signals based on weighted Fourier transforms. Algebraically, our distance $\GFMMD$ resembles a special case of these distances. Another similar distance is Diffusion EMD \cite{tong2021diffusion}, which involves diffusion graph signals to different scales using a diffusion operator (similar to that of a diffusion map \cite{coifman2006diffusion}) to create multiscale density estimates of the data. Then Diffusion EMD computes weighted $L^1$ distance between the multiscale density estimates of different signals. While this method is faster than most primal methods for EMD computation, it can be inaccurate unless the graph is significantly large. 

In \cite{le2022sobolev, le2019tree, essid2018quadratically}, the authors consider the EMD between distributions defined on a \emph{distance} graph, that is the edge weights define the cost of moving mass from one node to another. The authors in \cite{le2022sobolev, le2019tree} provide a closed-form solution that relies on a graph shortest path distance. In this setting, there is no sparse approximation to diffusion distances in terms of graph shortest path. We consider a different problem where the edges of the graph are affinities. Among methods for MMD, the most common method has been a sampling based method that also forms a 2-sample Kernel test based on defining a kernel between empirical observations \cite{gretton2012kernel}.  Note that semantically this takes distances between point clouds themselves by modeling them as a data graph with vertices as points. We define a method of taking signals which generalizes to an arbitrary graph, on a point cloud or otherwise, and demonstrate its effectiveness both when the graph lies in a metric space and when adjacencies are binary.

\section{Methods}

Given two probability distributions $P$ and $Q$ on an arbitrary graph, we are interested in taking a meaningful distance between them in a way that incorporates graph structure. Unlike the distance setting, there is no obvious notion of Lipschitzness. However, there is still a notion of smoothness. Indeed, we define Graph Fourier MMD as the MMD induced by witness functions $f$ which are \emph{smooth over the graph}. That is, they have a low value in $f^T \Lap f$. 

\begin{definition}\label{def:GFMMD_def}
    Let $\mathcal G = (\mathcal V, \mathcal E, w)$ be a finite  graph with Laplacian  $\mathbf{L}$ and $P,Q$ be two bounded probability distributions on $\mathcal V$. The Graph Fourier MMD between $P$ and $Q$ is
    \vspace{0mm}
    \[ \GFMMD(P,Q)\triangleq \max_{f : f^T\mathbf{L}f \leq 1}\mathbb E_{P}(f) - \mathbb E_Q(f). \] 
    \vspace{-5mm}
\end{definition}

Note that this definition holds for any construction of a positive semi-definite Laplacian matrix $\mathbf{L}$. We can show that, under reasonable conditions, GFMMD is finite and simple to compute. First, we need to establish a property which indicates that $P$ and $Q$ do not differ on the scale of connected components. 

\begin{figure}
    \centering
    \includegraphics[width=0.6 \linewidth]{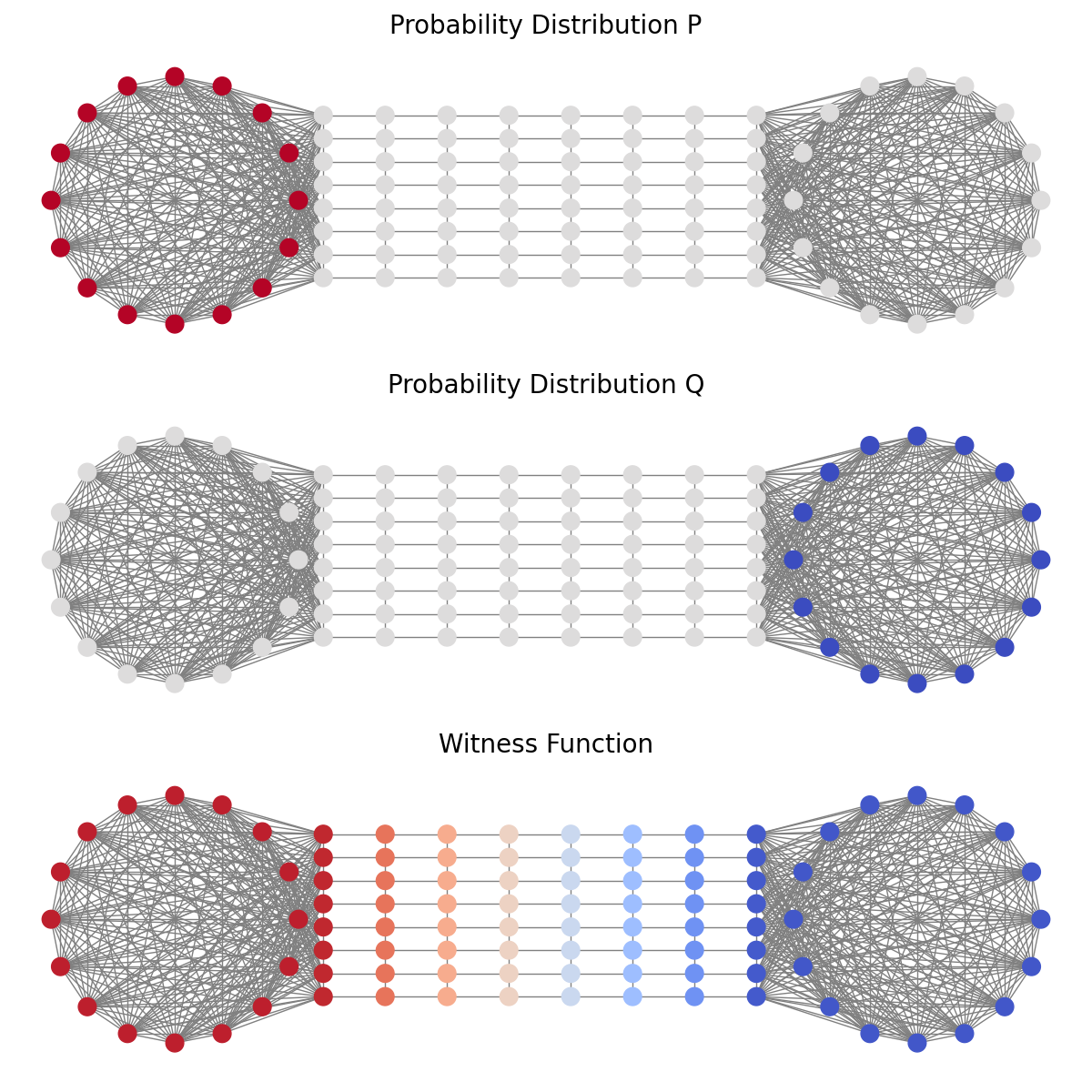}
    \caption{The witness function $f$ is positively activated over the left cluster, negatively activated over the right cluster, and varies smoothly along the middle segment. $f$ attests to where $P$ and $Q$ differ, but smoothness requires that it do so while incorporating graph structure.}
    \vspace{-5mm}
\end{figure}

\begin{definition} 
    Two probability distributions $P$ and $Q$ are said to have equal component mass if, for all connected components $S$ of $\mathcal G$, 
    $P(v \in S) = Q(v \in S)$.
\end{definition}

\begin{theorem}  \label{thm:gfmmd_solution}
    Let $P$ and $Q$ be bounded probability distributions defined on $\mathcal V$. If $P$ and $Q$ have equal component mass, then $\GFMMD(P,Q) =  \|\mathbf{L}^{-\frac{1}{2}}(P - Q)\|_2$. And otherwise, $\GFMMD(P,Q) = +\infty$.
\end{theorem}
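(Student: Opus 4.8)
The plan is to rewrite $\GFMMD(P,Q)$ as the linearly-constrained quadratic program $\max\{\langle f, P-Q\rangle : f^{T}\mathbf{L}f \le 1\}$ and to analyze it through the spectral decomposition of $\mathbf{L}$. The first step is to pin down $\ker \mathbf{L}$: from the identity $f^{T}\mathbf{L}f = \sum_{(a,b)\in\mathcal E} w(a,b)(f(a)-f(b))^{2}$ recalled above, $f^{T}\mathbf{L}f = 0$ exactly when $f$ is constant on every connected component, so $\ker\mathbf{L} = \operatorname{span}\{\mathbf{1}_{S} : S \text{ a connected component of } \mathcal G\}$. Consequently $P$ and $Q$ have equal component mass if and only if $\langle \mathbf{1}_{S}, P-Q\rangle = 0$ for every component $S$, i.e. $P-Q \in (\ker\mathbf{L})^{\perp} = \operatorname{range}(\mathbf{L}) = \operatorname{range}(\mathbf{L}^{1/2})$, using symmetry of $\mathbf{L}$. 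I will also note at the outset that $\mathbf{L}^{-1/2}$ denotes the Moore--Penrose pseudoinverse square root (consistent with the effective-resistance formula stated earlier), so the claimed formula is well-defined even when $\mathbf{L}$ is singular.

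Step two handles the divergent case. If equal component mass fails, choose a component $S$ with $P(S)\neq Q(S)$ and test the feasible vectors $f = t\,\mathbf{1}_{S}$ for $t\in\mathbb R$: these satisfy $f^{T}\mathbf{L}f = 0 \le 1$ while $\langle f, P-Q\rangle = t\,(P(S)-Q(S))$, which is unbounded above as $t$ runs over $\mathbb R$ with the appropriate sign; hence $\GFMMD(P,Q) = +\infty$.

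Step three handles the equal-component-mass case. Diagonalize $\mathbf{L} = \sum_{i} \lambda_{i}\psi_{i}\psi_{i}^{T}$ with the $\psi_i$ orthonormal, put $I = \{i : \lambda_i > 0\}$, and expand a feasible $f = \sum_i c_i\psi_i$, writing $b_i = \langle \psi_i, P-Q\rangle$. Since $P-Q\in\operatorname{range}(\mathbf{L})$ we have $b_i = 0$ for $i\notin I$, so both the constraint $\sum_{i\in I}\lambda_i c_i^{2} \le 1$ and the objective $\sum_{i\in I} c_i b_i$ depend only on $(c_i)_{i\in I}$, and the remaining coordinates may be set to zero without loss. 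Substituting $d_i = \sqrt{\lambda_i}\,c_i$ turns the problem into $\max\{\sum_{i\in I} d_i\,(b_i/\sqrt{\lambda_i}) : \sum_{i\in I} d_i^{2} \le 1\}$, whose optimal value is, by Cauchy--Schwarz, $\big(\sum_{i\in I} b_i^{2}/\lambda_i\big)^{1/2}$, attained at $d_i \propto b_i/\sqrt{\lambda_i}$. Finally $\sum_{i\in I} b_i^{2}/\lambda_i = \|\mathbf{L}^{-1/2}(P-Q)\|_{2}^{2}$ by definition of the pseudoinverse square root, giving $\GFMMD(P,Q) = \|\mathbf{L}^{-1/2}(P-Q)\|_{2}$, with maximizer proportional to $\mathbf{L}^{-1}(P-Q)$ (pseudoinverse) rescaled to the constraint boundary; this optimizer will be reused when deriving the feature map.

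I expect the only real subtlety --- more a point to state carefully than a genuine obstacle --- is the interaction with $\ker\mathbf{L}$: the feasible region $\{f : f^{T}\mathbf{L}f\le 1\}$ is an unbounded cylinder rather than a compact ellipsoid, so finiteness and attainment are not automatic. The equal-component-mass hypothesis is exactly what makes the objective invariant along the kernel directions, permitting the reduction to a compact ellipsoid inside $(\ker\mathbf{L})^{\perp}$; and the negation of that hypothesis is precisely what lets the objective escape to infinity along those same directions, which is why the dichotomy in the statement is clean.
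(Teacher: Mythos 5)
Your proposal is correct and follows essentially the same route as the paper's proof: the same $f = t\,\mathbf{1}_S$ construction for the divergent case, the same reduction to $(\ker\mathbf{L})^\perp$ via equal component mass, and the same change of variables plus Cauchy--Schwarz for the finite case (you just carry it out in eigencoordinates where the paper substitutes $y=\mathbf{L}^{1/2}f$ at the operator level). As a minor aside, your final answer is stated without the spurious square that appears in the last line of the paper's proof, matching the theorem as stated.
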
 

\begin{proof} 

    Suppose first that $P$ and $Q$ do not have the equal mass property. Then there exists a connected component $S$ for which, 
    \[ \sum_{v \in S} P(v) < \sum_{v \in S} Q(v) \]
    In particular, we can write $\sum_{v \in S} P(v) = \sum_{v \in S} Q(v) - c$ for some $c > 0$. Now, let $f_\alpha$ be a signal such that $f_\alpha(v) = \alpha$ if $v \in S$ and $f_\alpha(v) = 0$ otherwise. Then certainly, $f_\alpha^T \Lap f_\alpha = 0$, since it is known that indicator functions for connected components arer in the null space of $\Lap$. And so $f_\alpha^T \Lap f_\alpha \leq 1$, yet, 
    \begin{align*} \E_P(f_\alpha) - \E_Q(f_\alpha) &= \sum_{v \in \mathcal V} P(v)f_\alpha(v) -
    \sum_{v \in \mathcal V} Q(v)f_\alpha(v) \\ 
    & = \sum_{v \in S} \alpha P(v) - 
    \sum_{v \in S} \alpha Q(v)\\ & = \alpha c  \end{align*}
    S GFMMD is defined as $\sup_{f^T \Lap f \leq 1} \E_P(f) - \E_Q(f)$, we have $\GFMMD(P,Q) \geq \alpha c$. Taking $\alpha \to \infty$, we have $\GFMMD(P,Q) = +\infty$. \\ 

    Now suppose that $P$ and $Q$ do have the equal mass property. If we let 
    $\mathbb I_{S_1}... \mathbb I_{S_m}$ be indicator functions for connected components $S_1... S_m$, the equal mass property insists that 
    $P^T \mathbb I_{S_i} = Q^T\mathbb I_{S_i}$ for all $i$. And thus, $(P-Q)^T \mathbb I_{S_i} = 0$. Since it is known that these indicator functions form a basis for the kernel of $\Lap$, it follows that $P-Q \in \text{ker}(\Lap)^\perp$. Now, any function $f$ such that 
    $f^T \Lap f$ can be broken up into $f = f_1 + f_2$, where $f_1 \in \text{ker}(\Lap)$ and $f_2 \in \text{ker}(\Lap)$. Finally, observe that we can view $P$ and $Q$ as probability vectors indexed over $\mathcal V$. And so, 
    \begin{align*} \E_P(f) - \E_Q(f) &= P^Tf - Q^Tf = (P-Q)^T(f_1 + f_2) \\
    &  = (P-Q)^Tf_1 + (P-Q)^Tf_2 \\ & = (P-Q)^Tf_2 \end{align*}
    Furthermore, $f^T \Lap f = f_2^T \Lap f_2$. Combined, these observations tell us that we may assume, without loss of generality, that $f \in \text{ker}(\Lap)^\perp$. And thus, 
    \[ \GFMMD(P,Q) = \sup_{\substack{f^T\Lap f \leq 1 \\ f \in \text{ker}(\Lap)^\perp}} (P-Q)^T f \]
    Now, for any such $f$, we can define $y = \Lap^{\frac{1}{2}}f$. And thus, 
    $f^T \Lap f = f^T \Lap^{\frac{1}{2}} \Lap^{\frac{1}{2}} f = \|y\|_2^2$. Furthermore, since $f \in \text{ker}(\Lap)^\perp$, $f = \Lap^{-\frac{1}{2}}y$. Here, $\Lap^{-\frac{1}{2}}$ is the square root of the Moore-Penrose pseudoinverse $\Lap^\dagger$ of $\Lap$. Thus, 
    \[ \GFMMD(P,Q) = \sup_{y : \|y\|_2^2 \leq 1} (P-Q)^T \Lap^{-\frac{1}{2}}y \]
    Which clearly, by Cauchy Schwarz, is simply equal to $\|(P-Q)^T \Lap^{-\frac{1}{2}}\|_2^2  = \|\Lap^{-\frac{1}{2}}(P-Q)\|^2$, as desired.
\end{proof}

The effect of equal component mass is highly intuitive: we would expect ``infinite effort'' to move a probability distribution between disconnected sets of vertices. GFMMD possesses a set of convenient properties. Namely, we have a representation in terms of an explicit feature map $\mathbf{L}^{-\frac{1}{2}}$. So to compute pairwise distances, it is sufficient to apply the feature map and then take Euclidean distances. Also, the distance value is a true distance (particularly an MMD).

\begin{lemma}
    \label{lem:valid_distances}
    (i) $\GFMMD(\cdot, \cdot)$ defines a valid distance on the probability distributions acting on $\mathcal V$. Furthermore, (ii) $\GFMMD(P,Q)$ is a Maximum Mean Discrepancy with explicit feature map $\mathbf{L}^{-\frac{1}{2}}$ in the finite case.
\end{lemma}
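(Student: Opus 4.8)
The plan is to prove (i) by verifying the metric axioms directly, leaning on the closed form of Theorem~\ref{thm:gfmmd_solution}, and to prove (ii) by exhibiting an explicit finite-dimensional RKHS whose unit ball reproduces the constraint set of Definition~\ref{def:GFMMD_def}, so that the mean-embedding computation of Gretton et al.\ applies.

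For (i): nonnegativity and symmetry are immediate, since $\GFMMD(P,Q)$ is either $+\infty$ or the Euclidean norm $\|\mathbf{L}^{-\frac12}(P-Q)\|_2$, and both the norm and the equal-component-mass condition are symmetric in $P$ and $Q$. For definiteness, if $\GFMMD(P,Q)=0$ then $P,Q$ must have equal component mass (otherwise the value is $+\infty$) and $\mathbf{L}^{-\frac12}(P-Q)=0$; since $\ker(\mathbf{L}^{-\frac12})=\ker(\mathbf{L})$ while equal component mass forces $P-Q\in\ker(\mathbf{L})^\perp$ (established inside the proof of Theorem~\ref{thm:gfmmd_solution}), we get $P-Q\in\ker(\mathbf{L})\cap\ker(\mathbf{L})^\perp=\{0\}$, i.e.\ $P=Q$; the converse is trivial. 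For the triangle inequality, the key observation is that ``equal component mass'' is an equivalence relation on probability vectors: if $P,Q,R$ all lie in one class the inequality is the ordinary triangle inequality for $\|\cdot\|_2$ applied to $\mathbf{L}^{-\frac12}P,\mathbf{L}^{-\frac12}Q,\mathbf{L}^{-\frac12}R$, and in every other case at least one of $\GFMMD(P,Q),\GFMMD(Q,R)$ is $+\infty$ so the inequality is vacuous. Hence $\GFMMD$ is a genuine metric on each class of distributions sharing a component-mass profile, and an extended metric on the whole simplex.

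For (ii), in the finite case define the feature map $\varphi:\mathcal V\to\R^n$ by $\varphi(v)=\mathbf{L}^{-\frac12}\delta_v$ (the $v$th column of $\mathbf{L}^{-\frac12}$) and the associated kernel $k(u,v)=\langle\varphi(u),\varphi(v)\rangle=\delta_u^{\top}\mathbf{L}^{\dagger}\delta_v=(\mathbf{L}^{\dagger})_{uv}$. Since $\mathbf{L}^{\dagger}$ is positive semidefinite, $k$ is a valid reproducing kernel; its RKHS is $\mathcal H=\mathrm{range}(\mathbf{L}^{-\frac12})=\ker(\mathbf{L})^\perp$ with $\|f\|_{\mathcal H}^2=f^{\top}\mathbf{L}f$. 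The mean embedding of a probability vector $P$ is $\mu_P=\E_{v\sim P}[\varphi(v)]=\mathbf{L}^{-\frac12}P\in\mathcal H$, so by the reproducing property $\E_P(f)-\E_Q(f)=\langle f,\mu_P-\mu_Q\rangle_{\mathcal H}$, and Cauchy--Schwarz gives $\sup_{\|f\|_{\mathcal H}\le 1}\big(\E_P(f)-\E_Q(f)\big)=\|\mu_P-\mu_Q\|_{\mathcal H}=\|\mathbf{L}^{-\frac12}(P-Q)\|_2$, which by Theorem~\ref{thm:gfmmd_solution} equals $\GFMMD(P,Q)$ whenever $P,Q$ have equal component mass. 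Thus $\GFMMD$ is the MMD induced by the feature map $\mathbf{L}^{-\frac12}$.

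The one non-routine point is reconciling the constraint set $\{f:f^{\top}\mathbf{L}f\le 1\}$ of Definition~\ref{def:GFMMD_def}, which is an unbounded cylinder (it contains all of $\ker(\mathbf{L})$), with the RKHS unit ball $\{f\in\ker(\mathbf{L})^\perp: f^{\top}\mathbf{L}f\le 1\}$, which is bounded. These agree as domains for maximizing $\E_P(f)-\E_Q(f)$ only because the $\ker(\mathbf{L})$-component of $f$ is irrelevant once $P-Q\perp\ker(\mathbf{L})$ --- exactly the equal-component-mass regime, which is precisely where $\GFMMD$ is finite and where (ii) is asserted. Making this WLOG reduction explicit, together with the $\ker(\mathbf{L})^\perp$ bookkeeping for the pseudoinverse square root, is the main obstacle; everything else is the standard kernel-mean-embedding argument and elementary norm manipulations.
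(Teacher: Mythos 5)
Your proof is correct, but it routes around the paper's argument in two places, and the comparison is instructive. For the triangle inequality the paper never touches the closed form: it splits the supremum directly, $\sup_f(\E_P(f)-\E_Q(f)) \le \sup_f(\E_P(f)-\E_R(f)) + \sup_f(\E_R(f)-\E_Q(f))$, which is the generic IPM argument and works uniformly in the extended reals with no case analysis. You instead invoke Theorem~\ref{thm:gfmmd_solution} and split into cases according to the component-mass equivalence classes; this is more work but makes explicit the structure the paper leaves implicit, namely that $\GFMMD$ is a genuine metric on each equivalence class and an extended metric globally. For part (ii) the divergence is larger: the paper simply asserts that $\GFMMD$ ``takes the form of an MMD'' and names the feature map, whereas you actually construct the RKHS --- kernel $k(u,v)=(\Lap^\dagger)_{uv}$, space $\ker(\Lap)^\perp$ with squared norm $f^T\Lap f$, mean embeddings, Cauchy--Schwarz --- and, crucially, you reconcile the constraint set $\{f: f^T\Lap f\le 1\}$ of Definition~\ref{def:GFMMD_def} (an unbounded cylinder containing $\ker(\Lap)$) with the bounded RKHS unit ball. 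That reconciliation is precisely what the paper's ``nothing to verify'' glosses over, since $f\mapsto\sqrt{f^T\Lap f}$ is only a seminorm on all of $\R^{\mathcal V}$; your version is the more rigorous one. The definiteness argument is essentially identical in both.

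One bookkeeping slip in your triangle-inequality case analysis: when $P,Q,R$ do not all share a component-mass class, you claim ``at least one of $\GFMMD(P,Q),\GFMMD(Q,R)$ is $+\infty$ so the inequality is vacuous.'' If only the left-hand side $\GFMMD(P,Q)$ is infinite, the inequality is not vacuous --- you still need a right-hand term to be infinite. The fix is one line using the transitivity you already invoked: if $\GFMMD(P,R)$ and $\GFMMD(R,Q)$ were both finite, then $P\sim R$ and $R\sim Q$ would force $P\sim Q$, making the left-hand side finite as well. So the argument closes, but as written the sentence does not quite say why.
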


\begin{proof}
    Let $\mathcal P$ denote the set of probability distributions on $\mathcal{V}$. Nonnegativity and symmetry of $\GFMMD$ are simple to verify directly from the definition. We now show the triangle inequality. Fix $P,Q,R \in \mathcal P$. Letting $\mathcal F = \{f \: : \: f^T \Lap f \leq 1\}$,
    \begin{align*} \GFMMD(P,Q) &= \sup_{f \in \mathcal F}\E_P(f) - \E_Q(f) \\ 
    & = \sup_{f \in \mathcal F} \E_P(f) - \E_R(f) + \E_R(f) - \E_Q(f) \\ & \leq \sup_{f \in \mathcal F} \E_P(f) - \E_R(f) + \sup_{f \in \mathcal F}  \E_R(f) - \E_Q(f) \\ &= \GFMMD(P,R) + \GFMMD(Q,R) \end{align*}

We conclude (i) by showingd that $\GFMMD(P,Q) = 0$ if and only if $P = Q$. The first direction is trivial, since for all $f \in \mathcal F$, $\E_P(f) - \E_Q(f) = 0$. For the other direction, suppose that $\GFMMD(P,Q) = 0$. Then by theorem ~\ref{thm:gfmmd_solution}, $P$ and $Q$ necessarily have equal component mass. From this, it follows that for all components $S$, $\mathbb I_S^TP = \mathbb I_S^TQ$, so $P=Q$ over their projections onto $\ker(\Lap)$. But since $\GFMMD(P,Q) = \|\Lap^{-\frac{1}{2}}P - \Lap^{-\frac{1}{2}}Q\|_2 = 0$, $P = Q$ over $\ker(\Lap)^\perp$. Thus, $P = Q$.

For (ii), we see that $\GFMMD$ clearly takes the form of an MMD, so there is nothing to verify. And long as $\GFMMD(P,Q) < +\infty$, the distance is simply the Euclidean distance between $\varphi(P)$ and $\varphi(Q)$ using the feature map $\varphi: x \mapsto  \Lap^{-\frac{1}{2}}x$. 
\end{proof}

\subsection{Relationship with $f$-Spectral Distances}
The $f$ spectral distance between two signals $P$ and $Q$ per \cite{verma2017hunt} is defined as $\sum_{i} f(\lambda_i) (\hat{P}(i) - \hat{Q}(i))^2$, where $f$ is some monotone increasing or decreasing function, and $\hat{P}, \hat{Q}$ denotes the Fourier transform of $P$ and $Q$. Likewise, $\GFMMD(P,Q) = \sum_{ \lambda_i \neq 0} \frac{1}{\lambda_i}(\hat{P}(i) - \hat{Q}(i))^2$. Algebraically, $\GFMMD$ resembles the $f$-spectral distance for $f(x) = 1/x$, although there is different treatment of components in $\ker(\Lap)$, which arises from the fact that we only consider probability distributions.

\subsection{Relationship with Resistive Embeddings} 

Graph Fourier MMD provides a natural extension of resistive embeddings through application of the same feature map. In fact, resistive embeddings can be viewed as a special case of Graph Fourier MMD for dirac distributions. On the other hand, theorem ~\ref{thm:gfmmd_solution} offers a variational characteristic of resistive embeddings, which could be used for lower bounds. Finally, it can be shown that Graph Fourier MMD can be related to the the expected effective resistance of $X \sim P$ and $Y \sim Q$. Not only does this provide an interpretation of GFMMD, but it illustrates that spectral sparsification algorithms such as \cite{spielman2008graph} preserve GFMMD.

\begin{theorem} ~\label{thm:eff_res_couplings} If $X \sim P$ and $Y \sim Q$, not necessarily independent, then $\GFMMD(P,Q)^2 \leq \mathbb E_{X,Y}[Re(X,Y)]$.
\end{theorem}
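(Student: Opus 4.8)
The plan is to reduce the inequality to Jensen's inequality applied to the convex quadratic $u \mapsto \|u\|_2^2$ composed with the feature map $\Lap^{-1/2}$. First I would record the two identities I need: by Theorem~\ref{thm:gfmmd_solution}, $\GFMMD(P,Q)^2 = \|\Lap^{-1/2}(P-Q)\|_2^2$ whenever $P$ and $Q$ have equal component mass, and by the definition of effective resistance, $\mathrm{Re}(a,b) = \|\Lap^{-1/2}\delta_a - \Lap^{-1/2}\delta_b\|_2^2 = \|\Lap^{-1/2}(\delta_a - \delta_b)\|_2^2$, where $\Lap^{-1/2}$ is the square root of $\Lap^\dagger$. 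Let $\pi$ denote the joint law of the (possibly dependent) pair $(X,Y)$; since $X \sim P$ and $Y \sim Q$, $\pi$ is a coupling of $P$ and $Q$, so $\sum_b \pi(a,b) = P(a)$ and $\sum_a \pi(a,b) = Q(b)$.

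Next I would handle the generic case in which $P$ and $Q$ have equal component mass. Consider the random vector $U \triangleq \Lap^{-1/2}(\delta_X - \delta_Y) \in \mathbb{R}^n$. By linearity of expectation and the marginal constraints, viewing $P,Q$ as probability vectors, $\mathbb{E}[U] = \Lap^{-1/2}\big(\mathbb{E}[\delta_X] - \mathbb{E}[\delta_Y]\big) = \Lap^{-1/2}(P - Q)$. Since $v \mapsto \|v\|_2^2$ is convex, Jensen's inequality yields
\[
\mathbb{E}_{X,Y}[\mathrm{Re}(X,Y)] \;=\; \mathbb{E}\big[\|U\|_2^2\big] \;\geq\; \|\mathbb{E}[U]\|_2^2 \;=\; \|\Lap^{-1/2}(P-Q)\|_2^2 \;=\; \GFMMD(P,Q)^2 ,
\]
which is the claim. (Equivalently, $\mathbb{E}[\|U\|_2^2] = \|\mathbb{E}[U]\|_2^2 + \Tr(\Cov(U)) \geq \|\mathbb{E}[U]\|_2^2$ because covariance matrices are positive semidefinite; note also that taking the infimum over couplings $\pi$ exhibits $\GFMMD(P,Q)^2$ as a lower bound for a Wasserstein-type transport cost with ground cost $\mathrm{Re}$.)

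Finally I would dispose of the degenerate case where $P$ and $Q$ fail to have equal component mass, so that $\GFMMD(P,Q) = +\infty$ by Theorem~\ref{thm:gfmmd_solution}; I must then show the right-hand side is also $+\infty$. If $P(S) \neq Q(S)$ for some connected component $S$, then for any coupling $\pi$ one has $\pi(X \in S,\, Y \notin S) - \pi(X \notin S,\, Y \in S) = P(S) - Q(S) \neq 0$, so $\pi$ places strictly positive mass on some pair $(a,b)$ with $a$ and $b$ in different components. For such a pair $\delta_a - \delta_b \notin \ker(\Lap)^\perp$, so under the standard convention $\mathrm{Re}(a,b) = +\infty$, and hence $\mathbb{E}_{X,Y}[\mathrm{Re}(X,Y)] = +\infty$, making the inequality trivially true.

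The argument is short: the heart is one application of convexity together with linearity of expectation through the marginal constraints of the coupling. The only point requiring care is the bookkeeping around $\Lap^\dagger$ on disconnected graphs — in particular making precise the "infinite resistance across components" step and confirming that every coupling must incur it when component masses disagree — so that is where I would be most deliberate.
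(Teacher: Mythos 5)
Your argument is correct and is essentially the paper's own proof: the paper's ``bias--variance decomposition'' step, $\|\mathbb E[\varphi(X)-\varphi(Y)]\|^2 = \mathbb E\|\varphi(X)-\varphi(Y)\|^2 - \Tr\Cov(\varphi(X)-\varphi(Y)) \leq \mathbb E_{X,Y}[\mathrm{Re}(X,Y)]$ with $\varphi(a)$ the $a$-th column of $\Lap^{-\frac{1}{2}}$, is exactly your Jensen's inequality applied to $U = \Lap^{-\frac{1}{2}}(\delta_X - \delta_Y)$. Your separate treatment of the unequal-component-mass case is additional care the paper omits (it implicitly assumes equal component mass so that Theorem~\ref{thm:gfmmd_solution} gives the closed form), and it does require the convention $\mathrm{Re}(a,b)=+\infty$ across components rather than the literal pseudoinverse formula, as you correctly flag.
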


\subsection{Graph Fourier MMD for Signal Localization}

By fixing a probability distribution $P$ and considering its distance to the uniform distribution $U = \frac{1}{n}\mathbf{1}$, we obtain a measure of how much $P$ concentrates on the graph. For a high GFMMD to the uniform distribution, we say a signal is \emph{localized}, and otherwise we say it is \emph{dispersed}. Interestingly, localization simply corresponds to length in feature space.

\begin{definition}
The {\em Localization Score} $s(P)$ of a signal $P$ is defined as $\GFMMD(P,U) = \|\Lap^{-\frac{1}{2}}(P - U)\|_2 = \|\Lap^{-\frac{1}{2}}P\|_2$.
\end{definition}


\subsection{Computational Complexity and Speedup}


Computation of the pseudoinverse or $\Lap^{-\frac{1}{2}}$ is roughly $O(n^3)$. An alternate approach would be to simply calculate the solution to $\Lap y = P-Q$ via conjugate gradient descent. We elect to use Chebyshev polynomials to approximate $\Lap^{-\frac{1}{2}}P$ and $\Lap^{-\frac{1}{2}}Q$, then take Euclidean distances. This is a large improvement in the particular case where the graph $\mathcal{G}$ is sparse i.e.\ $|E| = O(n \log n)$. In such cases, we present an $O(n \log n)$ algorithm for the computation of GFMMD which is substantially faster than naive implementations based on a Chebyshev polynomial approximation of the filter in Algorithm~\ref{algorithm:GFMMDAlg} as well as a KNN kernel. The steps for an arbitrary graph are the same, but with $\mathbf{W}$ provided.

This gives a total fine grained time complexity of $\mathcal O((k_1 + t)n \log n + k_2 n m \log m)$ and a space complexity of $O(n \log n + mn)$ space. Here, $t$ is the order of the Chebyshev polynomial, $k_1$ is the threshold for number of nearest-neighbors in constructing $\mathcal G$, and $k_2$ is the number of nearest-distributions we'd like to calculate. More simply, for fixed Chebyshev order, and number of neighbors, the time to estimate distances between all distributions is $\mathcal O(n\log n + nm^2)$.

\begin{algorithm}[H]
    \begin{algorithmic}
        \State \textbf{Input:} A set of $n$ points $X \subseteq \mathbb R^d$, $m$ probability distributions $f_i : X \to \mathbb R$ in an $n \times m$ matrix $F$, and a kernel function $k : X \times X \to \mathbb R$
        \State \textbf{Output:} An $m \times n$ embedding matrix $\mE$ in which 
        $\|\mE_i - \mE_j\| = \GFMMD(f_i, f_j)$. $M$ and a distance matrix in which 
        $\mM_{ij} = \GFMMD(f_i, f_j)$ \\
        \State Create a thresholded K-Nearest Neighbor graph $\mathcal G$ over $X$ with $\mathcal O(n \log n)$ edges
        \State $\mW_{ij} \gets k(X_i, X_j)$ for all $(i,j) \in \mathcal E$ 
        \State $\mathbf{L} \gets \mathbf{D} - \mathbf{W}$
        \State $\mE_i \gets \mathbf{L}^{-\frac{1}{2}} f_i$ by exact or Chebyshev approximation of the filter $h(\lambda) = \lambda^{-\frac{1}{2}}$
        \State $\mM_{ij} \gets \|\mE_i - \mE_j\|$ for all $i,j \in [n]$
        \State return $\mM,\mE$
    \end{algorithmic}
    \label{algorithm:GFMMDAlg}
    \caption{GFMMD in Metric Space}
    \label{algorithm:GFMMDAlg}
\end{algorithm}

\section{Experimental Results \& Applications}

In our experiments, signals are always nonnegative and normalized to be interpreted as probability distributions. However, the metric induced by 
$\Lap^{-\frac{1}{2}}$ \emph{always} induces a valid seminorm on graph signals (but in particular, a norm on probability distributions).

\subsection{Identifying Distributions on the Swiss Roll}

In this experiment, we generate random point clouds centered at points on the swiss roll. More specifically, we sample $n = 100$ points on the swiss roll $x_1, x_2,\ldots, x_n$ and around each of these points, generate a point cloud $d_i$ of size $m = 100$ points from a multivariate normal distribution centered at $x_i$. The result is $nm$ points in $\mathbb R^{10}$. For each $i,j \in [n]$, we have a known geodesic distance between $x_i$ and $x_j$. Across the different measures, we can see how well the distance between the point clouds $d_i$ and $d_j$ compares to the geodesic distance between their corresponding centers $x_i$ and $x_j$. We compare the induced probability distributions on point clouds using: 1) computation of earth Mover's Distance between point clouds in ambient space, 2) Sinkhorn algorithm~\cite{cuturi2013sinkhorn}, 3) Diffusion EMD~\cite{tong2021diffusion}, 4)  Kernel MMD~\cite{gretton2012kernel} between all pairs $p_i, p_j$ via random sampling ($20$ points from each distribution with replacement), 5) Graph Fourier MMD between $p_i, p_j$, using both the exact calculation and approximation via Chebyshev polynomials. We then take correlation between the estimated nearest distributions and geodesic distance between centers. Results are shown in Table~\ref{table:swissroll}. As we see, Graph MMD outperforms all other methods in accuracy and speed. In the appendix, the corresponding feature maps are visualized alongside geodesic distance, where GFMMD visually outperforms other methods at extracting manifold nonlinearities.

\begin{table}[ht]
\caption{Comparison of runtime and Spearman-$\rho$ correlation to ground truth manifold distances between distributions with mean $\pm$ standard deviation over 10 seeds for 100 distributions of 100 points each on a swiss roll manifold. The exact Graph MMD is most performant but requires a eigen-decomposition. The Chebyshev approximated Graph MMD (Chebyshev, $t$) is extremely fast and almost as performant at even low orders $t$.}
\label{table:swissroll}
\centering
  \resizebox{\columnwidth}{!}{%
\begin{tabular}{lrrr}
\toprule
Method & Spearman-$\rho$ & 10-NN time (s) & All-pairs time(s) \\
\midrule
DiffusionEMD & 0.584 $\pm$ 0.017 & 2.171 $\pm$ 0.265 & 3.341 $\pm$ 0.333 \\
Exact & 0.253 $\pm$ 0.022 & 26.881 $\pm$ 1.104 & 26.881 $\pm$ 1.104 \\
Sinkhorn & 0.250 $\pm$ 0.022 & 54.346 $\pm$ 17.576 & 54.346 $\pm$ 17.576 \\
rbf-kernel-MMD & 0.509 $\pm$ 0.021 & 5.016 $\pm$ 0.237 & 5.016 $\pm$ 0.237 \\
Graph MMD (Exact) & \textbf{0.613 $\pm$ 0.019} & 139.453 $\pm$ 16.790 & 139.468 $\pm$ 16.794 \\
Graph MMD (Cheby, 8) & 0.606 $\pm$ 0.024 & \textbf{0.619 $\pm$ 0.057} & \textbf{0.641 $\pm$ 0.056} \\
Graph MMD (Cheby, 64) & 0.593 $\pm$ 0.021 & 1.155 $\pm$ 0.035 & 1.163 $\pm$ 0.035 \\
Graph MMD (Cheby, 512) & 0.612 $\pm$ 0.018 & 6.249 $\pm$ 2.896 & 6.258 $\pm$ 2.895 \\

Graph MMD (Cheby, 4096) & 0.612 $\pm$ 0.018 & 48.138 $\pm$ 1.184 & 48.159 $\pm$ 1.182 \\
\bottomrule
\end{tabular}
}
\vspace{-5mm}
\end{table}

\subsection{Single cell Analysis with GFMMD}
 
To demonstrate the utility of Graph Fourier MMD for biological analysis, we leverage publicly available single-cell RNA sequencing dataset of CD8-positive T cells \cite{Zheng2017-mt}. CD8-positive T cells are adaptive immune cells known to be critical for mediating immune response in infection, cancer, and other diseases. We apply Algorithm \ref{algorithm:GFMMDAlg} with the adaptive Gaussian Kernel \cite{Moon19} between datapoints, to compute GFMMD between genes, where each gene (of $1,991$ genes) is regarded as a distribution in a nearest neighbor cell graph over $9,167$ cells. In Figure~\ref{fig:geneclusters}A, we visualize the gene embedding using both PCA and PHATE \cite{Moon19}. We find that clusters $0-9$ in from the gene embedding show characteristic expression on the cellular embedding in Figure \ref{fig:geneclusters}B. In other words, the subplots in Figure \ref{fig:geneclusters}B represent a PHATE map of the cells in this dataset, and when we highlight the expression of {\em gene clusters} on the cells we see that these clusters have localized expression on the cellular manifold. To interpret these gene clusters for biological significance, we analyzed the gene set enrichment of clusters 6 and 7 with Enrichr~\cite{Chen2013-lq}, which show high expression in opposite ends of the cellular manifold (see Figure \ref{fig:geneclusters}. Enrichr shows that cluster 7 has strong enrichment for signatures of a naive T cell becoming activatied with mitosis and T cell activation signatures being significant. On the other hand, cluster 6 shows strong enrichment for an effector CD8 T cell, with signatures of cytotoxic activity and inflammatory signaling (interferon gamma). Thus, these genes can be used to characterize the cellular manifold as following a trajectory from naive to effector CD8 T cells. We compare these to gene clusters derived from DiffusionEMD, as well as to a more standard method of gene selection in biology: differential expression of genes in different areas of cellular state space based on a Wilcoxon rank sum test between the two manually curated cell clusters from \cite{Zheng2017-mt}. The gene clusters 4 and 8 from DiffusionEMD that were most enriched on the opposite ends of the manifold consisted of 6 genes and 11 genes, which resulted in no enrichment for the above signatures. These genes upregulated based on the Wilcoxon rank sum test give a much less clear picture of the cellular state space, with the same annotations scoring much lower. 

\begin{figure}{}
\centering
\includegraphics[width=\linewidth]{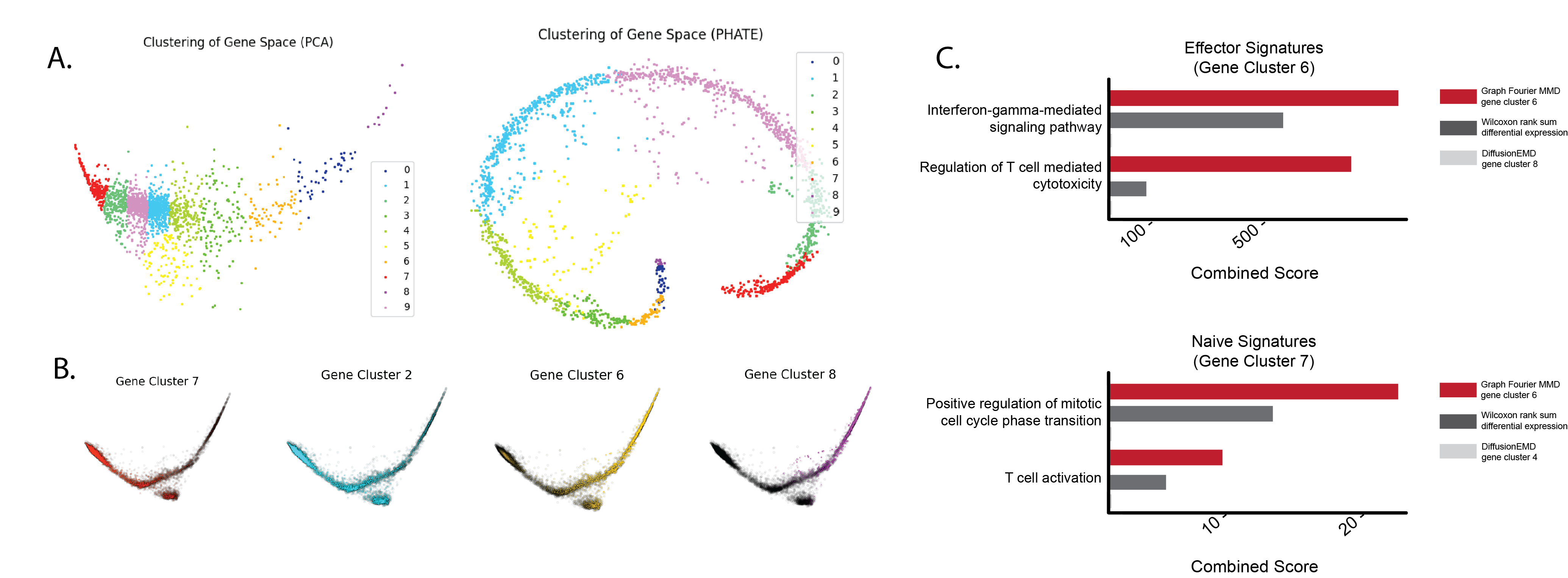}
\vspace{-3mm}
\caption{\small A. Embeddings of genes of the dataset from \cite{Zheng2017-mt} computed by the GFMMD Algorithm, visualized with PCA and PHATE \cite{Moon19}, colored by results of K-means clustering. B. Embeddings of cells from \cite{Zheng2017-mt} visualized with PHATE. Each plot is colored by the average expression of genes in the marked cluster over cells. C. Comparison of enrichment scores from Enrichr \cite{Chen2013-lq} on T-cell relevant annotations,  between GFMMD-based gene sets from clusters 7, 6 and differential expression-based gene sets.} 
\vspace{-5mm}
\label{fig:geneclusters}
\end{figure}

\paragraph{Local Genes} A novel type of analysis enabled by GFMMD is a search for {\em localized} signals. Often, researchers in the single cell field search highly variable genes, but we posit that genes that have localized expression on cellular manifolds can be used to characterize salient cellular subtypes. We propose the use of the localization score of the genes, viewed as probability distributions over the cells. Based on this score, in Figure~\ref{fig:localization}, we visualize first most local gene, 10th most local gene, and 20th most local. Here, we compare localization scores between housekeeping genes and the gene signature for naive CD8+ T cells. Housekeeping genes are expressed highly in many systems, but are not known to have a function that contributes strongly to cell-cell variation for T cells \cite{Eisenberg2003-jb, Wang2021-ga, De_Jonge2007-tx}. By contrast, cells enriched for the naive CD8+ T cell signature are a subset of T cells along the T cell differentiation axis. We show that the localization score is an order of magnitude higher for the naive gene signature versus the housekeeping signature Figure \ref{fig:localization} (see the appendix), validating our intuition about localized genes.

\vspace{-2mm}

\section{Conclusion}

In this paper we have introduced Graph Fourier MMD, a framework for taking distances between  signals on graphs and generating embeddings in which these distances hold. We have shown its intuitive performance in both the Riemannian and abstract graphical setting for known distributions, as well as its advantage in speed, and ability to capture global properties of the underlying data manifold compared to alternative methods like Earth Mover's Distance and Diffusion EMD. Its rapidity makes it particularly useful for high dimensional datasets, such as single cell data, where we have showed its ability to capture the natural trajectories of gene expression.



\bibliographystyle{plain}
\bibliography{main}

\begin{thebibliography}{10}

\bibitem{chandra1989electrical}
Ashok~K Chandra, Prabhakar Raghavan, Walter~L Ruzzo, and Roman Smolensky.
\newblock The electrical resistance of a graph captures its commute and cover
  times.
\newblock In {\em Proceedings of the twenty-first annual ACM symposium on
  Theory of computing}, pages 574--586, 1989.

\bibitem{Chen2013-lq}
Edward~Y Chen, Christopher~M Tan, Yan Kou, Qiaonan Duan, Zichen Wang,
  Gabriela~Vaz Meirelles, Neil~R Clark, and Avi Ma'ayan.
\newblock Enrichr: interactive and collaborative {HTML5} gene list enrichment
  analysis tool.
\newblock {\em BMC Bioinformatics}, 14(1):128, April 2013.

\bibitem{coifman2006diffusion}
Ronald~R Coifman and St{\'e}phane Lafon.
\newblock Diffusion maps.
\newblock {\em Applied and computational harmonic analysis}, 21(1):5--30, 2006.

\bibitem{cuturi2013sinkhorn}
Marco Cuturi.
\newblock Sinkhorn distances: Lightspeed computation of optimal transport.
\newblock {\em Advances in neural information processing systems}, 26, 2013.

\bibitem{De_Jonge2007-tx}
Hendrik J~M de~Jonge, Rudolf S~N Fehrmann, Eveline S J~M de~Bont, Robert M~W
  Hofstra, Frans Gerbens, Willem~A Kamps, Elisabeth G~E de~Vries, Ate G~J
  van~der Zee, Gerard~J te~Meerman, and Arja ter Elst.
\newblock Evidence based selection of housekeeping genes.
\newblock {\em PLoS One}, 2(9):e898, September 2007.

\bibitem{dudley2018real}
Richard~M Dudley.
\newblock {\em Real analysis and probability}.
\newblock CRC Press, 2018.

\bibitem{Eisenberg2003-jb}
Eli Eisenberg and Erez~Y Levanon.
\newblock Human housekeeping genes are compact.
\newblock {\em Trends Genet.}, 19(7):362--365, July 2003.

\bibitem{essid2018quadratically}
Montacer Essid and Justin Solomon.
\newblock Quadratically regularized optimal transport on graphs.
\newblock {\em SIAM Journal on Scientific Computing}, 40(4):A1961--A1986, 2018.

\bibitem{gretton2012kernel}
Arthur Gretton, Karsten~M Borgwardt, Malte~J Rasch, Bernhard Sch{\"o}lkopf, and
  Alexander Smola.
\newblock A kernel two-sample test.
\newblock {\em The Journal of Machine Learning Research}, 13(1):723--773, 2012.

\bibitem{hall1970r}
Kenneth~M Hall.
\newblock An r-dimensional quadratic placement algorithm.
\newblock {\em Management science}, 17(3):219--229, 1970.

\bibitem{le2022sobolev}
Tam Le, Truyen Nguyen, Dinh Phung, and Viet~Anh Nguyen.
\newblock Sobolev transport: A scalable metric for probability measures with
  graph metrics.
\newblock In {\em International Conference on Artificial Intelligence and
  Statistics}, pages 9844--9868. PMLR, 2022.

\bibitem{le2019tree}
Tam Le, Makoto Yamada, Kenji Fukumizu, and Marco Cuturi.
\newblock Tree-sliced variants of wasserstein distances.
\newblock {\em Advances in neural information processing systems}, 32, 2019.

\bibitem{mason2002chebyshev}
John~C Mason and David~C Handscomb.
\newblock {\em Chebyshev polynomials}.
\newblock Chapman and Hall/CRC, 2002.

\bibitem{Moon19}
Kevin~R. Moon, David van Dijk, Zheng Wang, Scott Gigante, Daniel~B. Burkhardt,
  William~S. Chen, Kristina Yim, Antonia van~den Elzen, Matthew~J. Hirn,
  Ronald~R. Coifman, Natalia~B. Ivanova, Guy Wolf, and Smita Krishnaswamy.
\newblock Visualizing structure and transitions in high-dimensional biological
  data.
\newblock {\em Nature Biotechnology}, 37(12):1482--1492, December 2019.

\bibitem{muller1997integral}
Alfred M{\"u}ller.
\newblock Integral probability metrics and their generating classes of
  functions.
\newblock {\em Advances in Applied Probability}, 29(2):429--443, 1997.

\bibitem{spielman2008graph}
Daniel~A Spielman and Nikhil Srivastava.
\newblock Graph sparsification by effective resistances.
\newblock In {\em Proceedings of the fortieth annual ACM symposium on Theory of
  computing}, pages 563--568, 2008.

\bibitem{sriperumbudur2012empirical}
Bharath~K Sriperumbudur, Kenji Fukumizu, Arthur Gretton, Bernhard
  Sch{\"o}lkopf, and Gert~RG Lanckriet.
\newblock On the empirical estimation of integral probability metrics.
\newblock {\em Electronic Journal of Statistics}, 6:1550--1599, 2012.

\bibitem{tong2021diffusion}
Alexander~Y Tong, Guillaume Huguet, Amine Natik, Kincaid MacDonald, Manik
  Kuchroo, Ronald Coifman, Guy Wolf, and Smita Krishnaswamy.
\newblock Diffusion earth mover’s distance and distribution embeddings.
\newblock In {\em International Conference on Machine Learning}, pages
  10336--10346. PMLR, 2021.

\bibitem{verma2017hunt}
Saurabh Verma and Zhi-Li Zhang.
\newblock Hunt for the unique, stable, sparse and fast feature learning on
  graphs.
\newblock {\em Advances in Neural Information Processing Systems}, 30, 2017.

\bibitem{Wang2021-ga}
Xiliang Wang, Yao He, Qiming Zhang, Xianwen Ren, and Zemin Zhang.
\newblock Direct comparative analyses of {10X} genomics chromium and
  smart-seq2.
\newblock {\em Genomics Proteomics Bioinformatics}, 19(2):253--266, April 2021.

\bibitem{Zheng2017-mt}
Grace X~Y Zheng, Jessica~M Terry, Phillip Belgrader, Paul Ryvkin, Zachary~W
  Bent, Ryan Wilson, Solongo~B Ziraldo, Tobias~D Wheeler, Geoff~P McDermott,
  Junjie Zhu, Mark~T Gregory, Joe Shuga, Luz Montesclaros, Jason~G Underwood,
  Donald~A Masquelier, Stefanie~Y Nishimura, Michael Schnall-Levin, Paul~W
  Wyatt, Christopher~M Hindson, Rajiv Bharadwaj, Alexander Wong, Kevin~D Ness,
  Lan~W Beppu, H~Joachim Deeg, Christopher McFarland, Keith~R Loeb, William~J
  Valente, Nolan~G Ericson, Emily~A Stevens, Jerald~P Radich, Tarjei~S
  Mikkelsen, Benjamin~J Hindson, and Jason~H Bielas.
\newblock Massively parallel digital transcriptional profiling of single cells.
\newblock {\em Nat. Commun.}, 8(1):14049, January 2017.

\end{thebibliography}

\newpage 
\appendix

\section{Appendix}

\subsection{Proof of Stated Results}

\subsubsection{Effective Resistances \& Couplings} 

\noindent \textbf{Theorem ~\ref{thm:eff_res_couplings}}\textit{ If $X \sim P$ and $Y \sim Q$, not necessarily independent, then $\GFMMD(P,Q)^2 \leq \mathbb E_{X,Y}[Re(X,Y)]$}

\begin{proof}
    The bias variance decomposition in dimension $n$ states that for a random vector $Z$ and point $a \in \mathbb R^n$, $\mathbb E\|Z - a\|^2 = \|\mathbb EZ - a\|^2 + Var(Z)$. Let $\varphi(a)$ denote 
    column $a$ of $\Lap^{-\frac{1}{2}}$, so that $\varphi(X), \varphi(Y)$ are random vectors. We have, $\GFMMD(P,Q)^2 = \|\Lap^{-\frac{1}{2}}P - \Lap^{-\frac{1}{2}}Q\|^2  = \|\mathbb E_X[\varphi(X)] - \mathbb E_Y[\varphi(Y)] \|^2  = \|\mathbb E_Y[\mathbb E_X[\varphi(X)]] - \mathbb E_Y[\mathbb E_X[\varphi(Y)]]\|^2$.
    By Fubini's Theorem for expectations, this is equal to,
    $\|\mathbb E_{X,Y}[\varphi(X)] - \mathbb E_{X,Y}[\varphi(Y)]\|^2  = \|\mathbb E_{X,Y}[\varphi(X)-\varphi(Y)]\|^2$. By the Bias-Variance Decomposition, 
    $\|\mathbb E_{X,Y}[\varphi(X)-\varphi(Y)]\|^2 = \mathbb E_{X,Y}\|\varphi(X) - \varphi(Y)\|^2 - \text{Var}_{X,Y}[\varphi(X) - \varphi(Y)]  \leq \mathbb E_{X,Y}\text{Re}(X,Y)$. We recognize that $\|\varphi(X) - \varphi(Y)\|^2 = \text{Re}(X,Y)$.
\end{proof}

\begin{corollary} ~\label{cor:spectral_bound}
    Suppose $P \: \& \: Q$ agree on a set of size $\mathcal A$, and suppose the union of  their supports is $\mathcal S$. Then, 
    $\textit{GFMMD}(P,Q) \leq \sqrt{(1-p)M} \leq \sqrt{(1-p)/2\lambda_2}$, where $p = \sum_{a \in \mathcal A}P(a)$, $M = \sup\{\text{Re}(X,Y) : X,Y \in S \setminus A \}$, and
    $\lambda_2$ is the Fiedler  value for the graph.
\end{corollary}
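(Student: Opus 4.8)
The plan is to obtain the first inequality as a direct application of Theorem~\ref{thm:eff_res_couplings}, by choosing a \emph{particular} coupling of $X\sim P$ and $Y\sim Q$ that parks most of the mass on the diagonal, and then to obtain the second inequality from a standard spectral bound on effective resistance. Throughout I assume $\mathcal G$ is connected, which is implicit in referring to ``the Fiedler value'' $\lambda_2$ (its smallest positive Laplacian eigenvalue); connectedness is also what makes every $\mathrm{Re}(a,b)$ finite.

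First I would split the two distributions according to where they agree. Since $P$ and $Q$ coincide on $\mathcal A$, write $P=\mu+\nu_P$ and $Q=\mu+\nu_Q$, where $\mu$ is the common sub-probability measure supported on $\mathcal A$, with total mass $p=\sum_{a\in\mathcal A}P(a)$, and $\nu_P,\nu_Q$ are the leftover sub-probability measures, each of total mass $1-p$, supported on $\mathcal S\setminus\mathcal A$. Then define a coupling $\pi\in\Pi(P,Q)$ as a mixture: with probability $p$ draw $X=Y$ from $\mu/p$; with probability $1-p$ draw $X\sim\nu_P/(1-p)$ and $Y\sim\nu_Q/(1-p)$ independently. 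A one-line check of the marginals (each equals $\mu+\nu_P$ resp.\ $\mu+\nu_Q$) shows $\pi$ is admissible.

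Next I would feed this coupling into Theorem~\ref{thm:eff_res_couplings}. On the diagonal event, which has probability $p$, we have $\mathrm{Re}(X,Y)=\mathrm{Re}(X,X)=0$; on the complementary event, which has probability $1-p$, both $X$ and $Y$ lie in $\mathcal S\setminus\mathcal A$, so $\mathrm{Re}(X,Y)\le M$ by definition of $M$. Hence $\mathbb E_{X,Y}[\mathrm{Re}(X,Y)]\le (1-p)M$, and Theorem~\ref{thm:eff_res_couplings} gives $\GFMMD(P,Q)^2\le (1-p)M$, i.e.\ the first inequality after taking square roots.

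Finally, for the second inequality I would bound $M$ spectrally. For any vertices $a,b$ the vector $\delta_a-\delta_b$ sums to zero, hence is orthogonal to $\ker(\Lap)=\mathrm{span}(\mathbf 1)$, so writing $\delta_a-\delta_b=\sum_{\lambda_i>0}c_i\psi_i$ in the Laplacian eigenbasis yields $\mathrm{Re}(a,b)=(\delta_a-\delta_b)^T\Lap^{\dagger}(\delta_a-\delta_b)=\sum_{\lambda_i>0}c_i^2/\lambda_i\le \tfrac{1}{\lambda_2}\sum_{\lambda_i>0}c_i^2=\tfrac{1}{\lambda_2}\|\delta_a-\delta_b\|_2^2=\tfrac{2}{\lambda_2}$. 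Thus $M\le 2/\lambda_2$, and substituting into the first inequality gives the stated spectral bound. The only genuine subtlety is the construction of the coupling and the verification that it is admissible; once that is in place, everything else is routine. It is also worth flagging explicitly that connectedness of $\mathcal G$ is used twice — to guarantee $\lambda_2>0$ and to guarantee that the effective resistances appearing in the argument are finite — which is consistent with the hypothesis that $P$ and $Q$ have finite $\GFMMD$.
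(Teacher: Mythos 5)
Your proof is correct and follows essentially the same route as the paper's: construct a coupling that places mass $p$ on the diagonal over the agreement set and mass $1-p$ on a coupling of the residual measures, apply Theorem~\ref{thm:eff_res_couplings} so that the diagonal event contributes zero resistance and the rest is bounded by $M$, and then bound $M\le 2/\lambda_2$ via the spectral decomposition of $\Lap^{\dagger}$. Your explicit splitting $P=\mu+\nu_P$, $Q=\mu+\nu_Q$ and marginal check is in fact a cleaner rendering of the paper's Bernoulli-mixture construction.
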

    
\begin{proof} 
    Let $Z$ be a Bernoulli random variable with success probability $p$. First, choose $X_0 \sim P, Y_0 \sim Q$. Construct $X = X_0 \mathbb I\{Z = 0\} + Z\mathbb I\{Z = 1\}$ and $Y = Y_0 \mathbb I\{Z = 0\} + Z\mathbb I\{Z = 1\}$. Thus, $\GFMMD(P,Q)^2 \leq \mathbb E_{X,Y}\text{Re}(X,Y) \leq E_{X,Y}[\text{Re}(X,Y) | Z = 1] \mathbb P(Z=1) + (1-p) \mathbb E_{X,Y}[\text{Re}(X,Y) | Z = 0]\mathbb P(Z=0) \leq (1-p) \mathbb E_{X,Y}[\sup\{\text{Re}(X,Y) : X,Y \in S \setminus A] = (1-p)M$. Furthermore, we can provide an upper bound for $M$. The Courant-Fisher theorem tells us that for nonzero $x$, $x^T \Lap x \leq \frac{1}{\lambda_2}\|x\|^2$, as $1/\lambda_2$ is the maximal eigenvector of $\Lap^{-1}$. Thus, letting $a \neq b$ be arbitrary vertices, we have that $\text{Re(a,b)} = (\delta_a - \delta_b)^T \Lap^{\dagger}(\delta_a - \delta_b) \leq 2/\lambda_2$. In particular, maximizing over all $a,b \in \mathcal S \setminus \mathcal A$, $M \leq 2/\lambda_2$.
\end{proof}

\subsubsection{An Additional Result}

We can also show that there is a nice correspondence for PCA on the space of dirac-distributions $\{\delta_i\}_i$ on the vertices, upon applying the feature map offered by GFMMD. In fact, the best $k$-dimensional representation (by multidimensional scaling) of the vertices will coincide almost exactly with Hall's Spectral Graph Drawing\cite{hall1970r}, which uses the first $k$ nontrivial eigenvectors to represent vertices using coordinates in $\mathbb R^k$. This is made formal by Theorem~\ref{thm:like_hall_drawing}.

\begin{theorem}
  \label{thm:like_hall_drawing} If $X = \{ \delta_i\}_{i \in \mathcal V}$ is a family of Kronecker-delta functions centered at each vertex of $\mathcal G$, then the $k$-dimensional embedding which best preserves the distances between signals in $X$ is equivalent up to rescaling to Hall's Spectral Graph Drawing of the Graph $\mathcal G$ in $k$-dimensions. 
\end{theorem}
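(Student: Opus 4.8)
The plan is to read ``the best $k$-dimensional embedding which preserves the distances'' as classical multidimensional scaling — equivalently, PCA — applied to the point cloud $\{\varphi(\delta_i)\}_{i\in\mathcal V}$ produced by the GFMMD feature map $\varphi:x\mapsto\Lap^{-\frac12}x$, and then to diagonalize the resulting Gram matrix and compare it with Hall's eigenvectors.

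First I would record three elementary facts about this point cloud: (a) $\varphi(\delta_i)$ is precisely the $i$-th column of $\Lap^{-\frac12}$; (b) the squared distance between $\varphi(\delta_i)$ and $\varphi(\delta_j)$ is the effective resistance $\mathrm{Re}(i,j)$, as already noted after the definition of effective resistances and in Theorem~\ref{thm:eff_res_couplings}; and, crucially, (c) the cloud is \emph{automatically centered}, $\sum_i\varphi(\delta_i)=\Lap^{-\frac12}\mathbf 1=0$, because $\mathbf 1$ — indeed every connected-component indicator — lies in $\ker\Lap=\ker\Lap^{-\frac12}$. Because of (c) the double-centering step of classical MDS is vacuous, so the Gram matrix whose leading $k$ eigenvectors define the optimal embedding is exactly $\Lap^{-\frac12}(\Lap^{-\frac12})^{\top}=\Lap^{\dagger}$ (equivalently, the sample covariance of the cloud is $\tfrac1n\Lap^{\dagger}$).

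Next I would diagonalize. Writing $\Lap=\sum_i\lambda_i\psi_i\psi_i^{\top}$ with $\lambda_1\ge\cdots\ge\lambda_n$, we get $\Lap^{\dagger}=\sum_{\lambda_i\neq0}\lambda_i^{-1}\psi_i\psi_i^{\top}$; thus the eigenvectors of $\Lap^{\dagger}$ are the $\psi_i$, and its \emph{largest} eigenvalues correspond to the \emph{smallest positive} eigenvalues of $\Lap$, i.e.\ to the nontrivial eigenvectors $\psi_{n-1},\dots,\psi_{n-k}$ (for a connected $\mathcal G$). Projecting $\varphi(\delta_j)$ onto $\psi_i$ gives the coordinate $\psi_i^{\top}\Lap^{-\frac12}\delta_j=\lambda_i^{-1/2}\psi_i(j)$, so MDS/PCA sends vertex $j$ to $\bigl(\lambda_{n-1}^{-1/2}\psi_{n-1}(j),\dots,\lambda_{n-k}^{-1/2}\psi_{n-k}(j)\bigr)$. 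On the other hand, Hall's spectral drawing \cite{hall1970r} minimizes $\sum_{(a,b)\in\mathcal E}w(a,b)\|x_a-x_b\|^2=\Tr(X^{\top}\Lap X)$ over $X\in\R^{n\times k}$ subject to $X^{\top}X=I$ and $X^{\top}\mathbf 1=0$, and its optimizer stacks the same eigenvectors $X=[\psi_{n-1}\mid\cdots\mid\psi_{n-k}]$. Hence the two embeddings use identical eigenvectors and differ only by the positive diagonal rescaling of axes $\operatorname{diag}(\lambda_{n-1}^{-1/2},\dots,\lambda_{n-k}^{-1/2})$, which is the asserted equivalence; for a disconnected $\mathcal G$ one runs the identical argument on $\ker(\Lap)^{\perp}$, replacing $\mathbf 1$ by the span of the component indicators.

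The content here is bookkeeping rather than difficulty. The one place to be careful is step (c): I need to verify the cloud is genuinely centered so that the MDS Gram matrix is literally $\Lap^{\dagger}$ and not its double-centered surrogate, and then to keep the spectral ordering straight — top eigenvectors of $\Lap^{\dagger}$ versus bottom \emph{nontrivial} eigenvectors of $\Lap$ — handling $\ker\Lap$ correctly; it is precisely this $\ker\Lap$/pseudoinverse interplay that forces the ``up to rescaling'' qualifier (the $\lambda_i^{-1/2}$ factors) and the connectedness caveat. A secondary point is to state explicitly which optimality criterion Hall's algorithm and classical MDS each solve, since the equivalence is exact only once both are cast as the same trace-optimization / eigenvalue problem.
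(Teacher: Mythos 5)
Your proposal is correct and follows essentially the same route as the paper: identify the point cloud with the columns of $\Lap^{-\frac{1}{2}}$, observe it is already centered since $\Lap^{-\frac{1}{2}}\mathbf{1}=0$, reduce to PCA/MDS on $\Lap^{\dagger}$, and match its top-$k$ eigenvectors (the bottom nontrivial eigenvectors of $\Lap$) with Hall's drawing up to the diagonal rescaling by $\lambda_i^{-1/2}$. If anything, you are more careful than the paper about the spectral ordering (the paper's convention $\lambda_1\ge\cdots\ge\lambda_n$ makes its own indexing of the selected eigenvectors slightly off) and about the disconnected case.
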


\begin{proof} Note that $X$, the data matrix of Kronecker Deltas, is equal to $\mathbf{I}$, the $n$-dimensional identity. So $\sqrt{T}\mathbf{L}^{-\frac{1}{2}}X = \sqrt{T}\mathbf{L}^{-\frac{1}{2}}$, hence the best $k$-dimensional embedding of $\sqrt{T}\mathbf{L}^{-\frac{1}{2}}$ (respecting the $L^2$ norm between columns) will be equivalent to Principal Component Analysis (P.C.A.). Since $\mathbf{L}^{-\frac{1}{2}}\mathbf{1} = 0$, $\mathbf{L}^{-\frac{1}{2}}$'s columns are mean-centered, so its covariance matrix of $\sqrt{T}\mathbf{L}^{-\frac{1}{2}}$ is $\frac{T}{n}\mathbf{L}^{{-\frac{1}{2}}^T} \mathbf{L}^{-\frac{1}{2}} = \frac{T}{n}\mathbf{L}^-$.

Since its columns and rows are already mean centered. And thus P.C.A. will select the eigenvectors of $\mathbf{L}^-$ corresponding to the $k$th largest eigenvalues. Note that these are precisely given by $\psi_{1}, \psi_{2}.. \psi_{k}$ with associated eigenvalues in $\mathbf{L}^-$ given by $\lambda_1^{-1}\ldots \lambda_k^{-1}$. Letting $\Lambda_k = \text{diag}(\lambda_1^{-1/2}\ldots \lambda_k^{-1/2})$ and $\Psi_k = \begin{pmatrix} \psi_1 \: \ldots \psi_k \end{pmatrix}$, P.C.A. would embed $\sqrt{T}\mathbf{L}^{-\frac{1}{2}}$ as, 
\vspace{-3pt}
\[ \Psi_k^T \mathbf{L}^{-\frac{1}{2}} = \Psi_k^T \Psi \Lambda^{-\frac{1}{2}}\Psi^T \] 
\[ = \begin{pmatrix} \mathbf{I}_k & 
\mathbf{0}_{n-k}
\end{pmatrix}\Lambda^{-\frac{1}{2}}\Psi^T  \] 
\[ 
 = \begin{pmatrix} \mathbf{I}_k\Lambda_k & 
\mathbf{0}_{n-k} \end{pmatrix}\Psi^T = \Lambda_k \Psi_k^T. \]

So our embedding of distributions would be given by $\Lambda_k \Psi_k^T$. On the other hand, Hall's Spectral Graph Drawing would embed the graph $\mathcal G$ simply as $\Psi_k^T$, since it chooses the first $k$ nontrivial eigenvectors of $\mathbf{L}$. Thus, coordinates in each embedding are the same up to the rescaling by eigenvalues.
\end{proof}

 \subsection{Additional Figures for Experiments} 

\subsubsection{Swiss Roll Experiment}

The first of these figures is the first two principal components of the feature map $\Lap^{-\frac{1}{2}}$ applied to the distributions, which demonstrates the ability of GFMMD to capture nonlinear directions in a linear space in the presence of strong noise.
On the left of Figure ~\ref{fig:swiss_roll} is EMD, where the oscillatory pattern illustrates its ineffectiveness at calculating distances between distributions on graphs, since Euclidean distance between points on the swiss roll has periodic behavior in curvature. Diffusion EMD and Kernel MMD are effective at taking distances 
    between points initially, but fail to discern between higher and higher distances. Graph Fourier MMD, on the other hand, has a far more clear linear correlation, which levels off much slower.  \\ 

\begin{figure}
    \centering
    \includegraphics[width=0.6\linewidth]{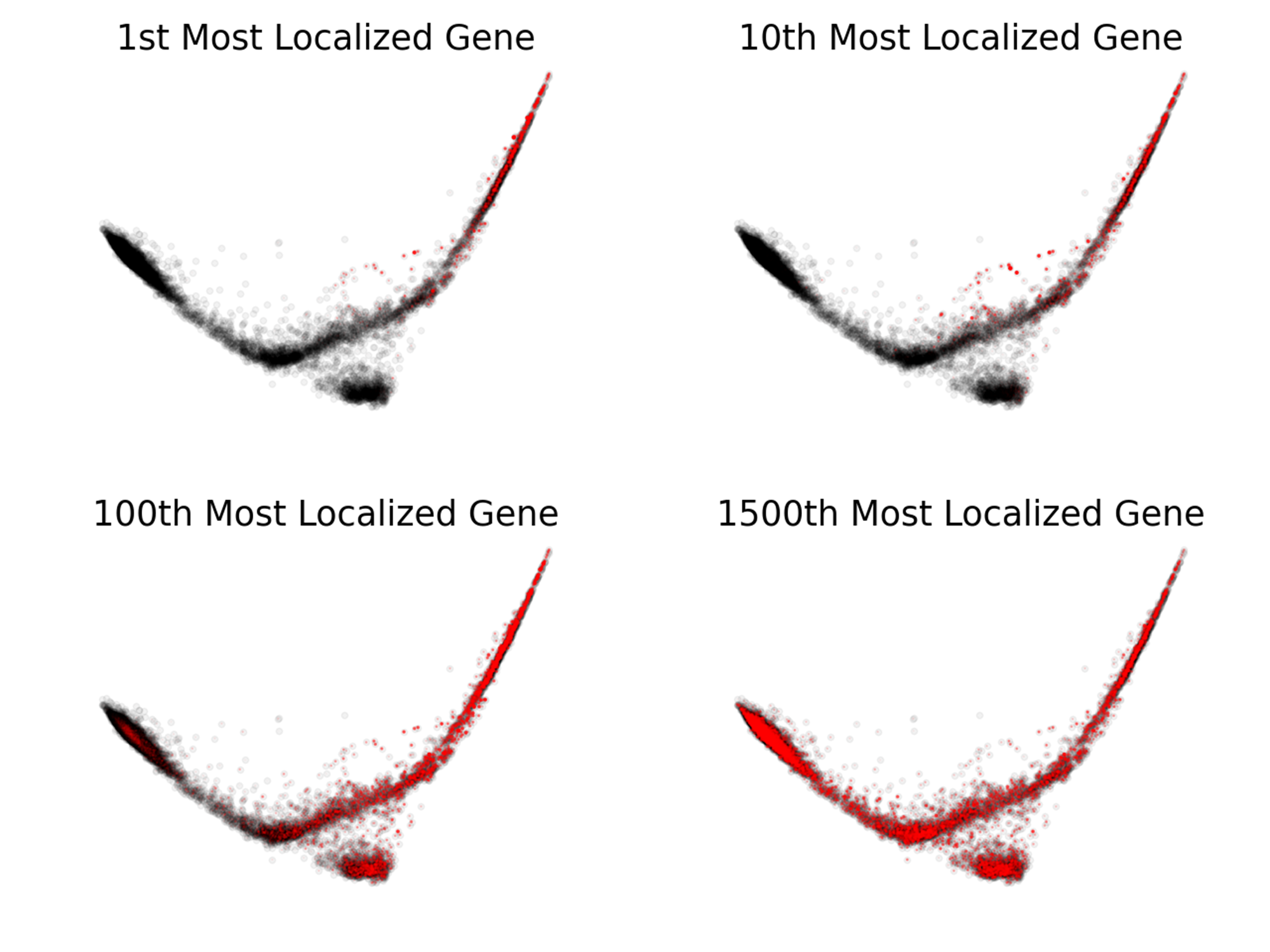}
    \includegraphics[width=0.39\linewidth]{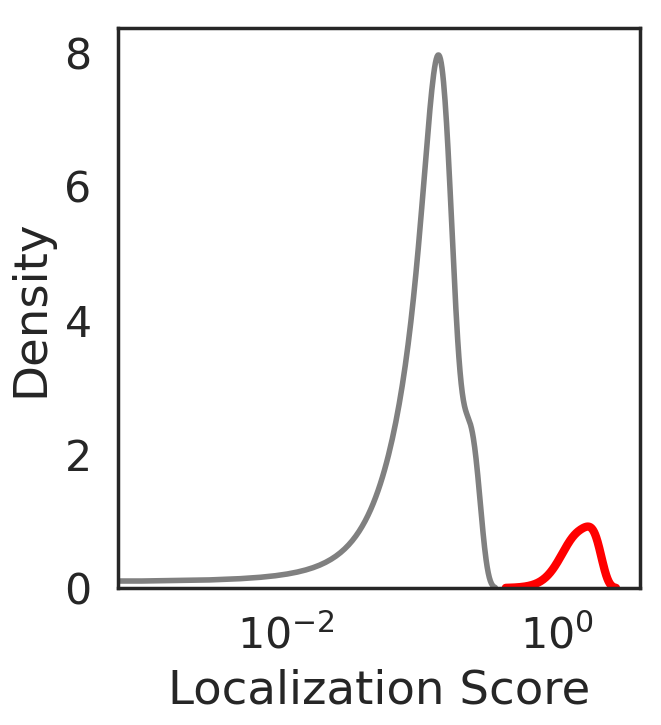}
    \caption{We visualize the 1st, 10th, 100th, and 1500th most local genes on the cell graph. Indeed, we find the expected behavior. Density plots for localization scores, comparing housekeeping genes and naive CD8+ T cell signature. The naive gene signatures are given by the red curve and Housekeeping gene signatures by the gray.}
    \label{fig:localization}
    \vspace{-8mm}
\end{figure} 

\subsubsection{Single Cell Localization}

Below, we have visualizations of the spread of the most localized signals over the graph. Here, PHATE is used to produce two dimensional embeddings of cells in Euclidean space, and color intensity is used as an indicator for gene expression. 

 \begin{figure}[H]
    \centering
    \includegraphics[scale=0.15]{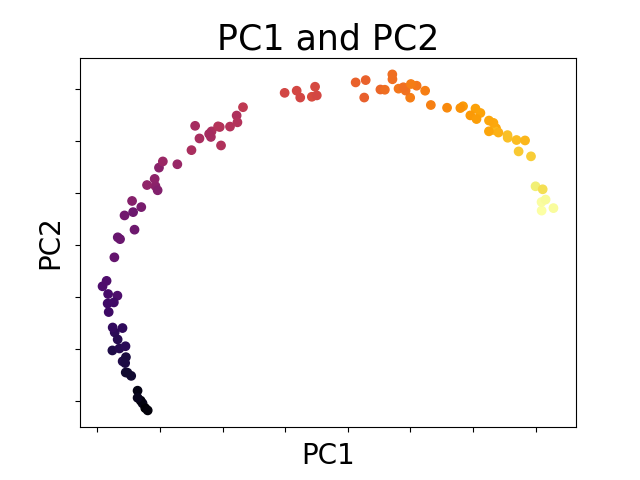}
    \includegraphics[scale=0.15]{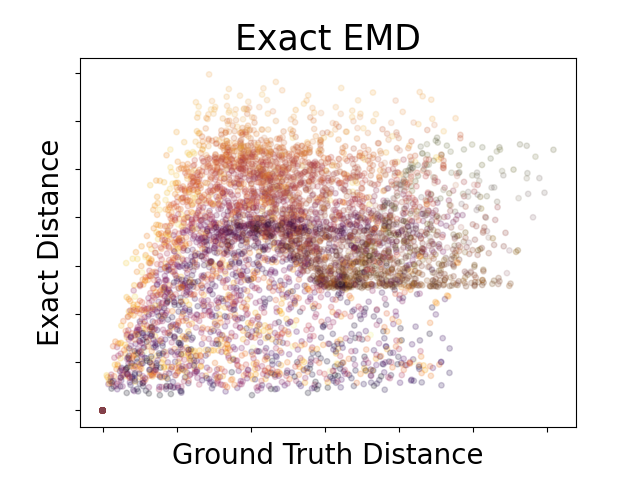}
    \includegraphics[scale=0.15]{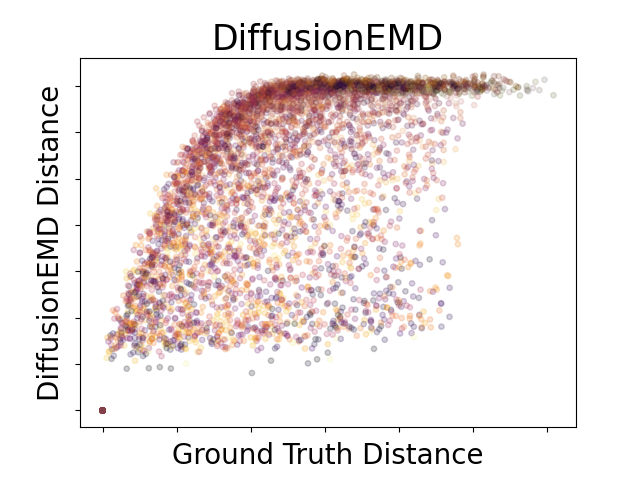}
    \includegraphics[scale=0.15]{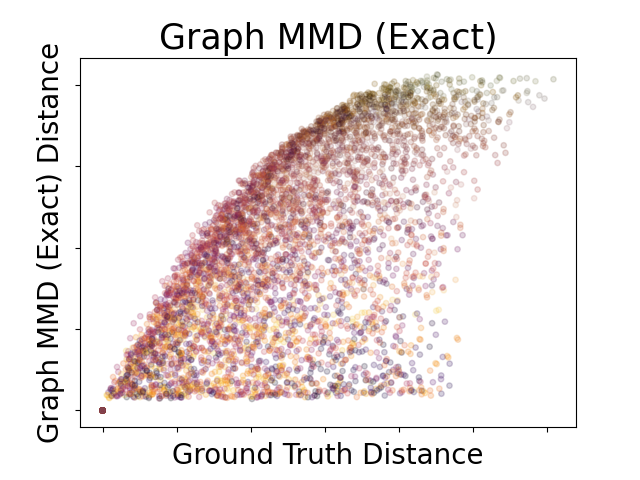}
    \includegraphics[scale=0.15]{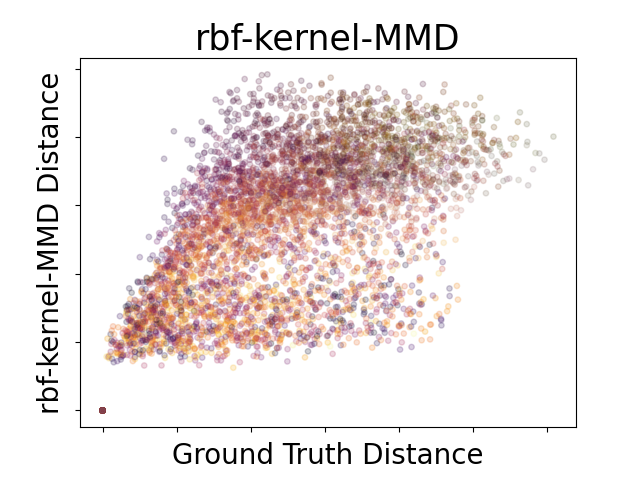}
    
    \caption{Left: first two PCs of the embeddings $E$ from Algorithm ~\ref{algorithm:GFMMDAlg}., colored by the coordinate of the corresponding center along the curved direction of the swiss roll. Right: Geodesic distance between centers vs. corresponding distance between distributions}
    \label{fig:swiss_roll}
    \vspace{-4mm}
\end{figure}

\subsection{Additional Toy Experiments}

\paragraph{Grid Graph} First, we consider a $16 \times 16$ grid graph (vertices given by $\{(i,j)\}_{1 \leq i,j \leq 16}$. We can construct a signal $P$ by placing a Dirac $\delta_{(8,4)}$ on the vertex (8,4) and then diffusing it with a heat filter (using time $\tau = 16$). $Q$ is generated likewise, but by applying a heat filter to $\delta_{(8, 4 + 2j)}$ and diffusing for each $j = 0,1,2,3$. The result are two modes: $P$ on the left, and $Q$ moving along the right. The distributions are visualized in the top row, and the witness function to their difference in the bottom row of figure ~\ref{fig:grid_graph}.

And of course, the corresponding distances between $P$ and the $Q$'s (per the order presented above) are increasing in the distances between the appropriate centers.

\begin{figure}[H]
    \centering
    \includegraphics[width=0.2\linewidth]{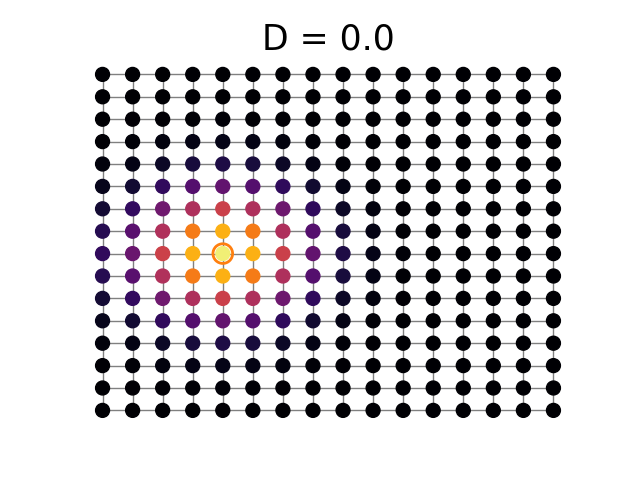}
    \includegraphics[width=0.2\linewidth]{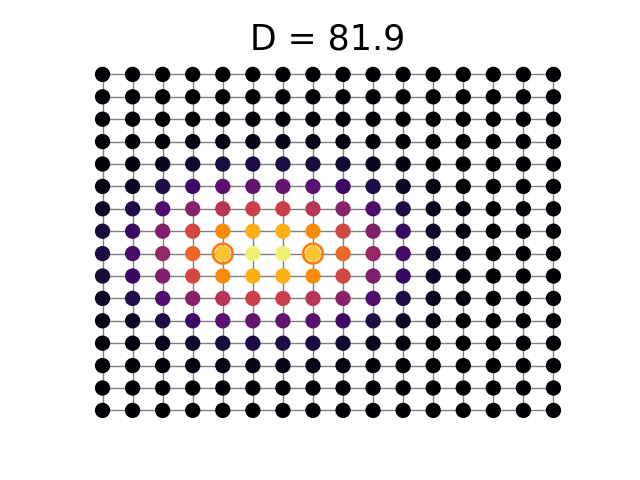}
    \includegraphics[width=0.2\linewidth]{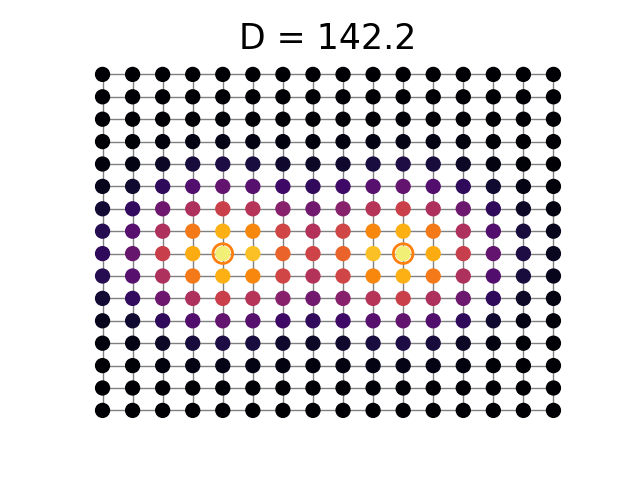}
    \includegraphics[width=0.2\linewidth]{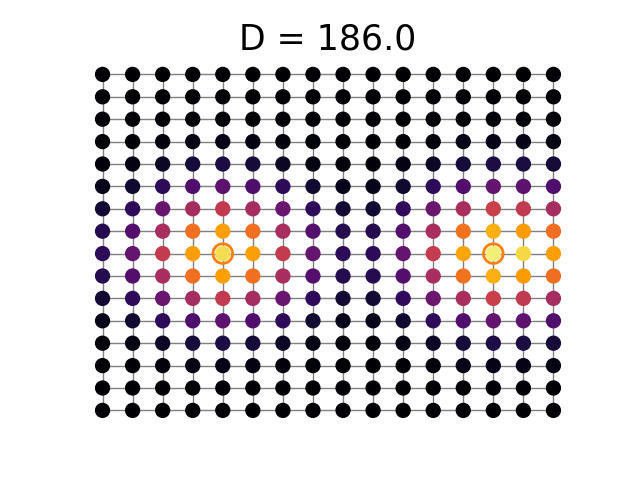}
    \includegraphics[width=0.2\linewidth]{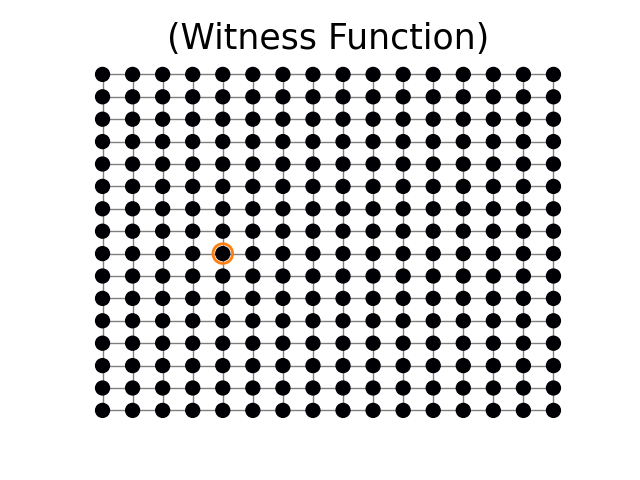}
    \includegraphics[width=0.2\linewidth]{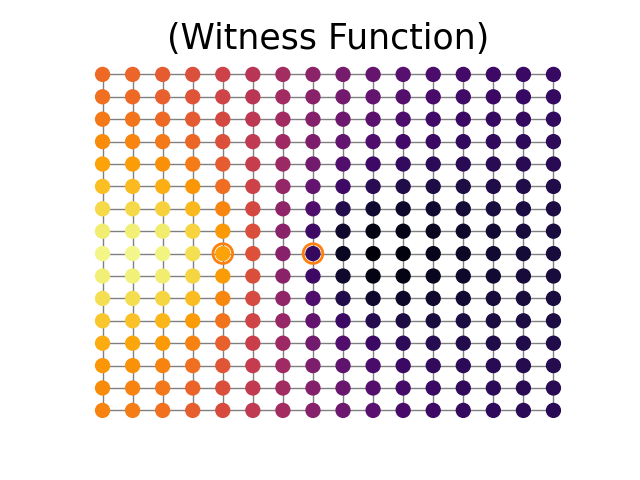}
    \includegraphics[width=0.2\linewidth]{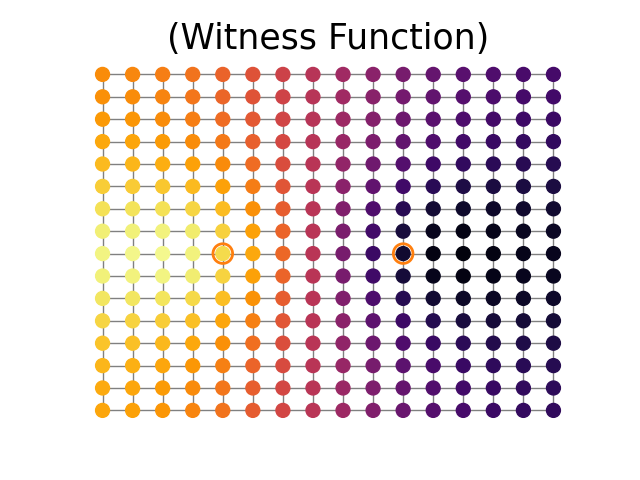}
    \includegraphics[width=0.2\linewidth]{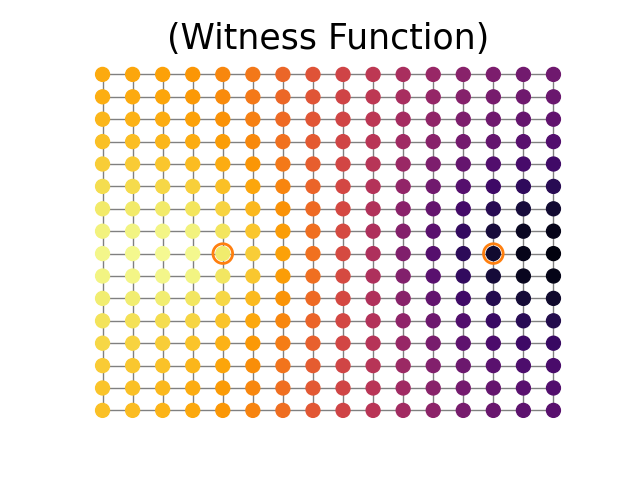}
    \caption{Top row: the distributions $P$ and $Q$, where the signal $P$ stays fixed but the vertex at which $Q$ is centered shifts to the right. Corresponding distances between distributions appear in the title, and the relevant centers of $P$ and $Q$ are highlighted. Bottom row: corresponding witness functions $f$ to the difference between $P$ and $Q$.}
    \label{fig:grid_graph}

\end{figure}

\paragraph{Bunny Graph} One very simple sanity check of a measure of spread is to verify that the more we diffuse a Dirac, the lower the distance to the uniform. Indeed, if we begin with the Bunny graph (from pygsp's built in library) and diffuse the Dirac $\delta_{1400}$ ($1400$ was chosen for visual appeal) for scales $\tau = 2^0, 2^4, 2^8$, and $2^12$ (using a heat filter), we find that the corresponding measures of spread are 40.5, 26.9, 21.5, and 9.76. The signals are visualized below:

\begin{figure}[H]
    \centering
    \includegraphics[scale=0.2]{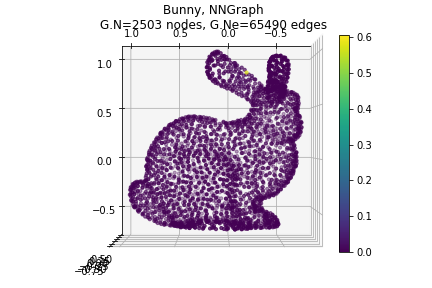}
    \includegraphics[scale=0.2]{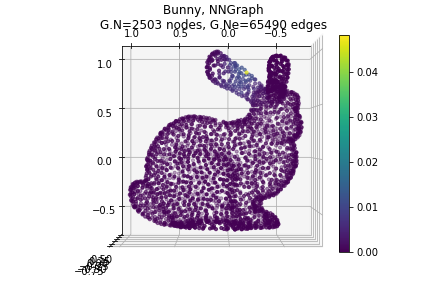}
    \includegraphics[scale=0.2]{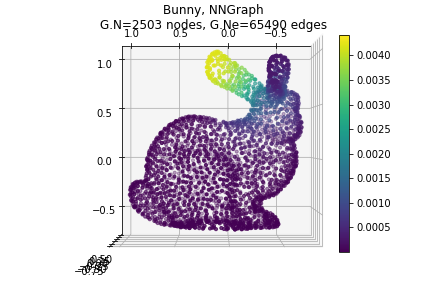}
    \includegraphics[scale=0.2]{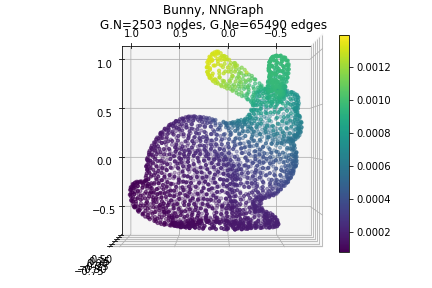}
    \caption{The signal $\delta_{1400}$ diffused to levels 1,$6^1$, $6^2$, and $6^3$ using a heat filter.}
    \label{fig:bunny}
\end{figure}

\subsection{ Bimodal Signals}

We can take the earlier signals from the grid graph (each pair of $P$ and $Q$ for translations of $Q$) and combine them into a new signal $\frac{1}{2}(P + Q)$. This forms a family of bimodal signals for which the two modes spread. Accordingly, in the example above, the distance to the uniform is given by 11.14, 8.66, 6.13, and 6.09.

\subsection{Localization on the Minnesota Graph}

\subsubsection{Example: Minnesota Graph (Binarized)} 

A final sanity check for a measure of closeness to the uniform would be to begin with a density which puts all its mass on one vertex. Then, put equal mass on that vertex and its neighbors, then the neighbors of neighbors, etc. More specifically, let $N_k(i,j) = \{ \exists k' \in [k] : A^{k'} > 0\}$, or $N_k(i,j) = \mathbf{1}\{$there is a path of length $\leq k$ from $i$ to $j\}$. Then we can consider multiplying this by a Dirac, say $\delta_0$ to get a family of signals. Using $k = 1, 4^1, 4^2, 4^3$, we have a family of distributions proportional to $N_1\delta_0, N_2 \delta_0, N_3\delta_0$, and $N_4\delta_0$. Again, we can visualize the activated vertices in yellow:

\begin{figure}[H]
    \centering
    \includegraphics[width=0.3\linewidth]{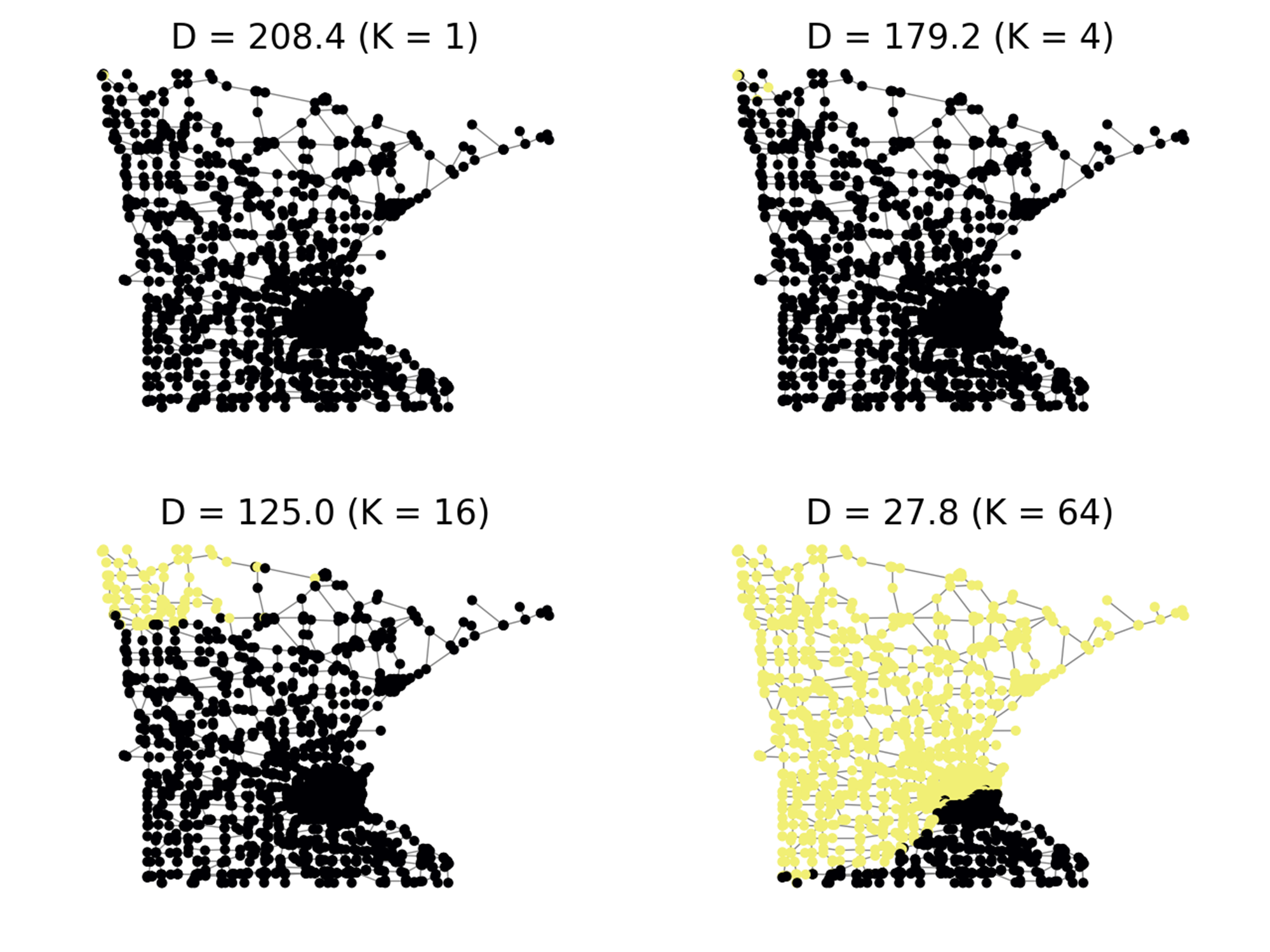}
    \caption{The zeroth vertex's neighbors, then neighbors of neighbors, etc.\ for order $1, 4, 16$, and $64$ neighbors. The corresponding distances to the uniform are given in the title. }
    \label{fig: meta}
\end{figure}

\subsection{Example: Minnesota Graph (Smooth Waves)}

A similar example we can consider is a similar class of signals which "spread" across the graph, but rather than activating neighbors, simply diffusing the signal from a given start vertex. Here, we choose the same start vertex, and run heat diffusion at times $\tau = 2^0,2^4, 2^8$, and $2^{12}$. 

\begin{figure}[H]
    \centering
    \includegraphics[width=0.3\linewidth]{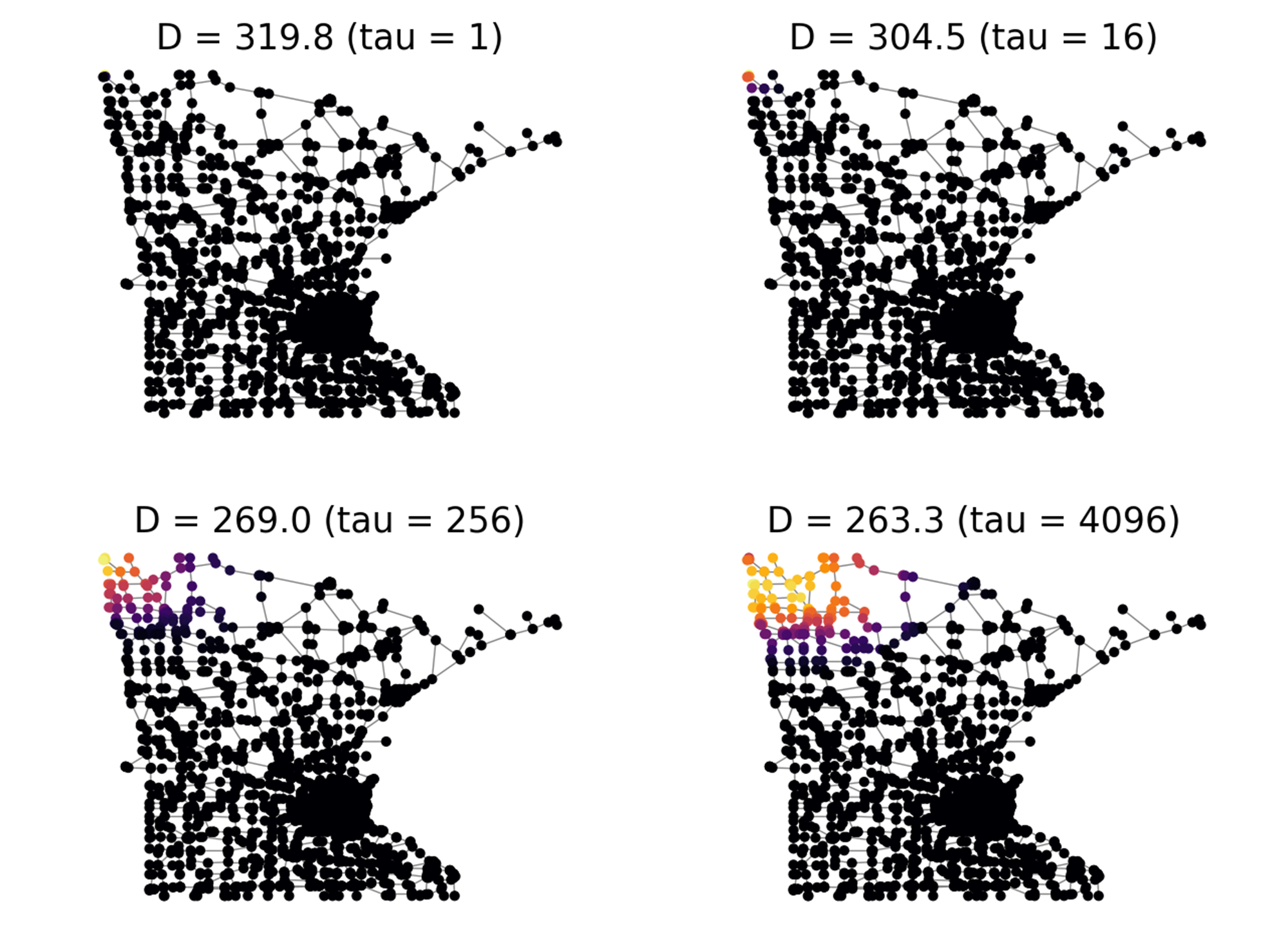}
    \caption{Visualization of the diffusions of the signal $\delta_0$ at times $2^0, 2^4, 2^8$, and $2^{12}$. The corresponding distances to the uniform are given above.}
    \label{fig: meta_diffused}
\end{figure}

\end{document}